\pgfplotsset{compat=1.16}  % for diagrams 
\newtheorem{theorem}{Theorem}
\theoremstyle{definition}
\newtheorem*{assumption*}{\assumptionnumber}
\providecommand{\assumptionnumber}{}
\newcommand*\rel@kern[1]{\kern#1\dimexpr\macc@kerna}
\newcommand*\widebar[1]{%
  \begingroup
  \def\mathaccent##1##2{%
    \rel@kern{0.8}%
    \overline{\rel@kern{-0.8}\macc@nucleus\rel@kern{0.2}}%
    \rel@kern{-0.2}%
  }%
  \macc@depth\@ne
  \let\math@bgroup\@empty \let\math@egroup\macc@set@skewchar
  \mathsurround\z@ \frozen@everymath{\mathgroup\macc@group\relax}%
  \macc@set@skewchar\relax
  \let\mathaccentV\macc@nested@a
  \macc@nested@a\relax111{#1}%
  \endgroup
}
\DeclareMathOperator*{\argmin}{argmin}
\DeclareMathOperator*{\argmax}{argmax}
\DeclareMathOperator*{\minimize}{minimize}
\DeclareMathOperator*{\maximize}{maximize}
\DeclareMathOperator{\st}{subject\,\,to}
\def\R{\mathbb{R}}
\def\Z{\mathbb{Z}}
\def\E{\mathbb{E}}
\def\P{\mathbb{P}}
\def\cZ{\mathcal{Z}}
\def\Dtrain{D^{\rm train}}
\title{Prompt-to-Leaderboard}   
\author{Evan Frick$^*$, Connor Chen$^*$, Joseph Tennyson$^*$, Tianle Li$^*$,\\ Wei-Lin Chiang$^*$, Anastasios N. Angelopoulos$^*$, Ion Stoica\\\texttt{\{evanfrick, connorchen, josephtennyson, tianleli,}\\\texttt{weichiang, angelopoulos, istoica\}@berkeley.edu}}
\date{University of California, Berkeley\\ \today\\{\small *equal contribution}}
\begin{document}
\maketitle

\begin{abstract}
    Large language model (LLM) evaluations typically rely on aggregated metrics like accuracy or human preference, averaging across users and prompts. This averaging obscures user- and prompt-specific variations in model performance.
To address this, we propose Prompt-to-Leaderboard (P2L), a method that produces leaderboards specific to a prompt or set of prompts.
The core idea is to train an LLM taking natural language prompts as input to output a vector of Bradley-Terry coefficients which are then used to predict the human preference vote.
The resulting prompt-dependent leaderboards allow for unsupervised task-specific evaluation, optimal routing of queries to models, personalization, and automated evaluation of model strengths and weaknesses. 
Data from Chatbot Arena suggest that P2L better captures the nuanced landscape of language model performance than the averaged leaderboard. 
Furthermore, our findings suggest that P2L's ability to produce prompt-specific evaluations follows a power law scaling similar to that observed in LLMs themselves. In January 2025, the router we trained based on this methodology achieved the \#1 spot on the Chatbot Arena leaderboard. Our code is available at this GitHub link: \url{https://github.com/lmarena/p2l}.
\end{abstract}

\section{Introduction}
Evaluating the real-world performance of large language models is an unresolved challenge.
A growing suite of benchmarks, including MMLU~\citep{hendrycks2020measuring}, MMLU-Pro~\citep{wang2024mmlu}, and GPQA~\citep{rein2023gpqa}, seek to address the challenge by reporting task-specific performance metrics, such as multiple-choice question-answering ability.
These highly-curated benchmarks focus on domain-specific performance measures but do not capture the general and subjective nature of organic human preferences.
Live evaluations, such as Chatbot Arena~\citep{chiang2024chatbot}, assess real-world performance by collecting millions of organic human preferences from users who visit the site and vote between pairs of model responses. 
These pairwise comparisons are aggregated using Bradley-Terry (BT) regression~\cite{bradley1952rank} to form a leaderboard.
This leaderboard averages over many users and prompts, only providing a coarse understanding of performance.

For example, if we want to identify the best model for SQL queries, the overall Chatbot Arena leaderboard may not be useful since SQL queries make up only 0.6\% of organic submissions and thus have little influence in the ranking. 
A natural solution is to stratify the data and run a separate BT regression for SQL queries. 
However, collecting the 3{,}000-5{,}000 SQL votes needed for a stable ranking would require around a million total votes---taking months to collect. 
Finer-grained categories, for example SQL table joins, would demand even more data, making stratified regression impractical and slow.
And the finest-grained analyses---for example, producing leaderboards for a \emph{specific} prompt or use-case---are rendered impossible.

This manuscript proposes a solution to this problem via a method called Prompt-to-Leaderboard (P2L).
P2L takes a prompt as input and outputs a leaderboard quantifying LLM performance \emph{on that specific prompt}.
Thus, P2L can be used to assess which models are best for a specific use-case, as opposed to on average.
Per-prompt leaderboards can also be aggregated over a group of prompts to form personalized leaderboards, showing which model is best for an individual or enterprise based on their prompt history.

The system works by training a P2L model, which is an LLM trained on human preference feedback to output a Bradley-Terry (BT) coefficient for every model in question; see Section~\ref{sec:core-method}. 
Because P2L characterizes the prompt-conditional win rate of any two models, it enables several downstream applications.
These include optimally routing prompts to LLMs  (Section~\ref{sec:routing}), personalized evaluations based on a user's prompt history (Section~\ref{sec:aggregating}), automated strength and weakness analysis of models (Section~\ref{sec:strengths-weaknesses}), and more.
Thus, we view P2L as a general-purpose tool for highly granular evaluations extracted from large corpuses of preference data.
As a demonstration of P2L's utility, we tested our prompt routing strategy on Chatbot Arena between the dates 01/19/2025---01/27/2025, and it achieved the \#1 spot with a score increase of 25 points over the previous top model, \texttt{Gemini-exp-1206} (see ``P2L router performance'' in Figure~\ref{fig:teaser}).

\begin{figure*}
    \centering
    \includegraphics[width=1.0\linewidth]{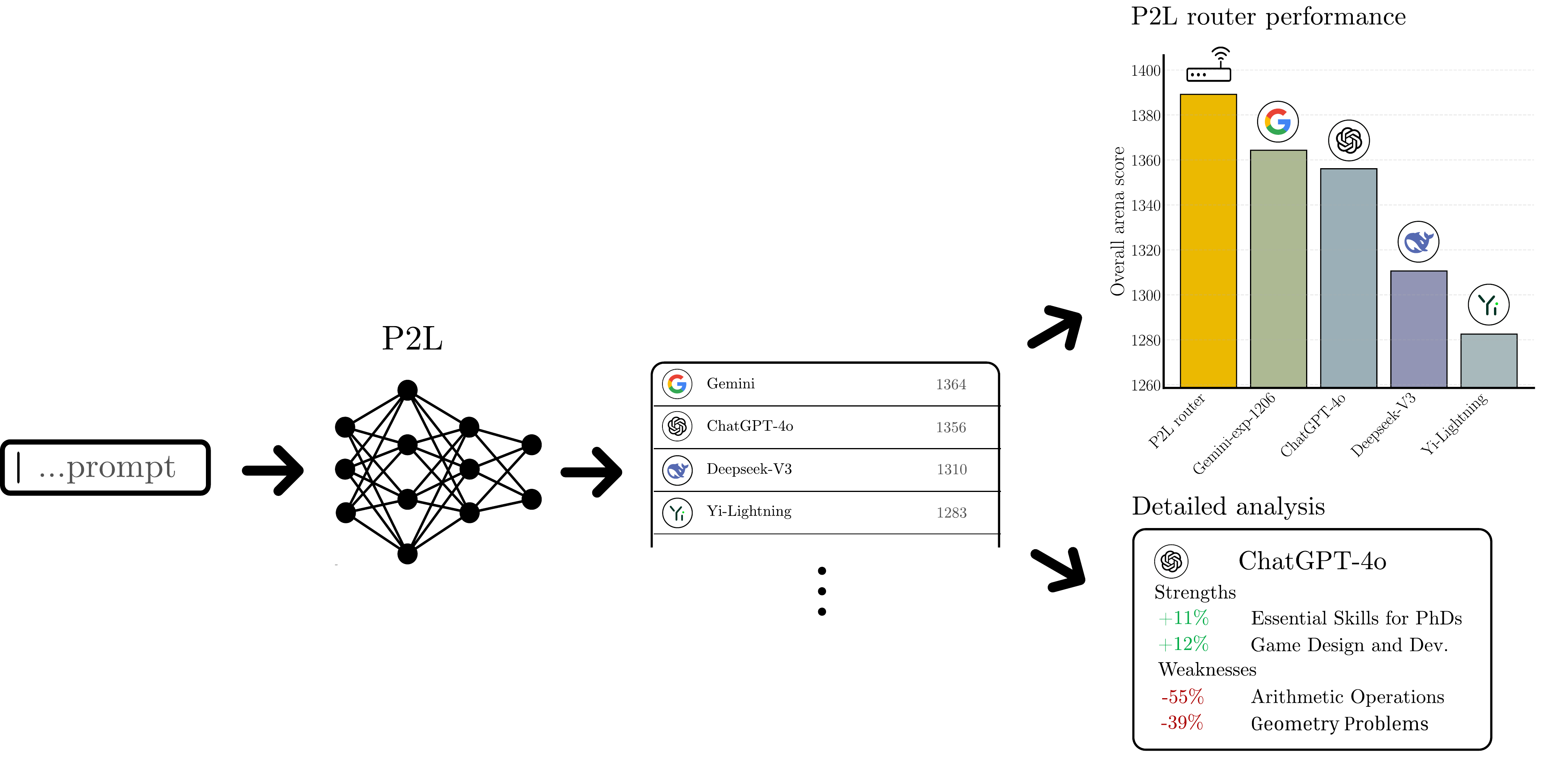}
    \caption{\textbf{Pipeline of P2L.} P2L takes a prompt or a set of prompts and outputs an $M$-dimensional vector that we call a leaderboard. Once we have a leaderboard, we can build better data products, like routers and automatic analyses (see right).}
    \label{fig:teaser}
\end{figure*}

More broadly, P2L is a subclass of a more general methodology we call Prompt-to-Regression (P2R) for training LLMs to output coefficients of parametric statistical regressions (see Section~\ref{sec:p2r}).
A canonical example that we will develop throughout this paper is a model taking prompts as input and outputting Bradley-Terry coefficients, as mentioned earlier.
However, the method also accommodates other feedback models (ties, real values, etc.) via other parametric models.
We describe this method and derive the optimal routing strategy in Section~\ref{sec:methods}.
We show experiments and other applications in Section~\ref{sec:experiments}.

\section{P2L method}
\label{sec:methods}
We describe the P2L method formally, beginning with notation. 
Consider $M$ different LLMs which are presented to humans pairwise---model $A$ on the left, and model $B$ on the right, where $A$ and $B$ are randomly sampled without replacement from $[M] = \{1, \ldots, M\}$.
If the human votes for model $A$, we set $Y=0$, and if they vote for model $B$, we set $Y=1$.
Furthermore, we let $X$ represent a `two-hot' encoding of the model pair, i.e., a vector of length $M$ with zeros everywhere except $+1$ in the index $B$ and $-1$ in the index $A$.
We model our data-generating process as a tuple $(X,Y,Z)$ of two-hot encodings, votes, and prompts $Z \in \cZ$ sampled from a joint distribution $P$, where $\cZ$ denotes the space of natural-language prompts.
Also, let $\Theta$ denote a space of functions mapping prompts to leaderboards, i.e., $\theta \in \Theta$ is a function from $\cZ \to \R^M$, and $\theta(z)_i$ represents the leaderboard score of model $i \in [M]$ given prompt $z$.
Finally, let $\ell$ denote the binary cross-entropy loss and $\sigma$ denote the sigmoid function.

\subsection{Core method}
\label{sec:core-method}

Conceptually, our method works as follows.
We model the vote conditionally on the prompt and model pair as following a Bradley-Terry (BT) model~\cite{bradley1952rank}:
\begin{equation}
    \label{eq:prompt-BT}
    \P(Y = 1 \mid X = x, Z = z) = \sigma(x^\top\theta^*(z)),
\end{equation}
for some (unknown) $\theta^* : \cZ \to \R^M$.
The goal is to approximate $\theta^*$ from data.

For any prompt $z \in \cZ$, $\theta^*(z)$ represents a leaderboard.
Each model $m \in [M]$ has a coefficient $\theta^*(z)_m$, and the higher this coefficient is, the more likely model $m$ beats any other model on the prompt $z$.
For different prompts, the leaderboard will be different, capturing the idea that different models are better on different prompts.
Our target, $\theta^*$, is precisely the function that takes prompts and outputs leaderboards---hence the name, \emph{prompt-to-leaderboard} (P2L).

P2L is a strict generalization of marginal BT regression.
In marginal BT regression, we simply omit the dependence of the leaderboard on the prompt, and give the best leaderboard on average (``marginally'').
That is, choosing $\Theta$ to be the class of \emph{constant} functions $\theta(z) \equiv \theta \in \R^M$ exactly recovers marginal BT regression.

However, P2L can be substantially more powerful than marginal BT regression due to heterogeneity in the prompt-conditional performance of different language models.
That is, we should leverage language models to extract information on model performance from the prompt.
In particular, our work takes $\Theta$ to be a space of reward models mapping prompts to vectors.
Given a training dataset $\Dtrain = \{(X_i,Y_i,Z_i)\}_{i=1}^N$, we find the empirical risk minimizer,
\begin{equation}
    \label{eq:p2l-bt}
    \hat\theta = \argmin_{\theta \in \Theta} \frac{1}{N}\sum\limits_{i=1}^N \ell(\sigma(X_i^\top \theta(Z_i)), Y_i).
\end{equation}
Then, as before, we can extract the estimated win rate between any two models as
\begin{equation}
    \widehat{\P}(Y = 1 \mid X = x, Z = z) = \sigma(x^{\top}\hat\theta(z)).
\end{equation}

Lastly, we note that this strategy of training LLMs to output coefficients of parametric statistical models will be generalized in Section~\ref{sec:p2r}.
The resulting prompt-dependent models have both high predictive power and a useful statistical interpretation, which is critical to the aforementioned routing and personalization techniques.

\subsubsection{Aggregating leaderboards}
\label{sec:aggregating}

Many practical scenarios require a leaderboard for a distribution over prompts, not just one.
For example, a user may want to know which model is best for them based on their chat history, or an enterprise may want to know which model is best for their use-case.
In other words, given a distribution over prompts $Q$, we want to ensemble all $\theta^*(z)$ for $z \in \Z$ to form a leaderboard over $Q$.
In the case of a finite chat history, we can consider $Q$ to be the discrete uniform distribution over the observed historical prompts.

By the Tower property, we can decompose the win rate as 
\begin{equation}
    \E_{Z \sim Q. Y \sim \mathrm{Bern}(\sigma(X^\top\theta^*(Z)))}[Y \mid X = x] = \int_{z \in \cZ} \sigma\left(x^\top \theta^*(z)\right) dQ(z).
\end{equation}
The win rate above no longer follows a simple logistic model, but we can fit another logistic model to match it:
\begin{equation}
\label{eq:aggregation-function}
\tilde\theta(Q) = \argmin_{\theta \in \Theta}  \E_{\substack{X \sim P_X, Z \sim Q, \\ Y\sim \mathrm{Bern}(\sigma(X^\top\theta^*(Z)))}} \left[  \ell\left(\sigma(X^\top \theta), Y\right) \right].
\end{equation}
The idea is that, because we know $\P(Y=1 \mid X=x, Z=z) = \sigma(x\top\theta^*(z))$ for all $x$ and $z$, we can simulate the data-generating process.
This allows us to construct a synthetic dataset and fit a Bradley-Terry model to it.
If $\theta^*$ exists, this technique is perfect, in that it recovers the exact same BT coefficients that we would have obtained by observing an infinite population of prompts from $Q$.
In Appendix~\ref{app:aggregating-averaging}, we explore an alternative leaderboard aggregation strategy by taking a weighted average of the leaderboards.
Note also that we use $\theta^*$, with the understanding that in practice we will use the plug-in estimate based on $\hat\theta$, and the resulting rule will be approximate.

We can make this strategy more efficient by leveraging the linearity of the binary cross-entropy loss.
Namely, 
\begin{align}
    & \E_{X \sim P_X, Z \sim Q, Y\sim \mathrm{Bern}(\sigma(X^\top\theta^*(Z))}\left[  \ell\left(\sigma(X^\top \theta), Y\right) \right] \\
    = &\E_{X \sim P_X, Z \sim Q}\left[ \E_{Y\sim \mathrm{Bern}(\sigma(X\top\theta^*(Z))} \left[\ell\left(\sigma(X^\top \theta), Y\right) | X, Z \right]\right] \\
    = &\E_{X \sim P_X, Z \sim Q}\left[\ell\left(\sigma(X^\top \theta), \E_{Y\sim \mathrm{Bern}(\sigma(X^\top\theta^*(Z))} \left[Y | X, Z\right] \right)\right] \\
    = &\E_{X \sim P_X, Z \sim Q}\left[\ell\left(\sigma(X^\top \theta), \sigma(X^\top\theta^*(Z)) \right)\right]. 
\end{align}

Thus, we can bypass the need for sampling to simulate $Y$.
In other words,~\eqref{eq:aggregation-function} is equivalent to
\begin{equation}
    \label{eq:efficient-aggregation-function}
    \tilde\theta(Q) = \argmin_{\theta \in \Theta}  \E_{X \sim P_X, Z \sim Q}\left[  \ell\left(\sigma(X^\top \theta), \sigma(X^\top \theta^*(Z))\right) \right].
\end{equation}
This last expression is simple to compute for discrete distributions $Q$, leading to an efficient algorithm.

\subsubsection{Optimal routing}
\label{sec:routing}

Next, we will derive the optimal router based on P2L.
We will derive the exact optimal router based on $\theta^*$ and approximate it in practice by $\hat\theta$.
Let us assume, for the sake of simplicity, that for each model $m \in \{1, \ldots, M\}$, there is a known and fixed cost of inference, $c = (c_1, \ldots, c_M)$.
We seek to create a router that maximizes performance while remaining below a constraint on the average cost, $C$.
We express the router as a policy, $\pi : \cZ \to \Delta^M$, which takes a prompt as input and outputs a distribution over models; we seek to estimate the optimal policy, $\pi^*$.
We will also consider a distribution of opponent models, $q \in \Delta^M$, to act as a baseline for comparison.
For instance, we can pick $q$ to be a point-mass on the single best model, or to be uniform over all $[M]$ models.

One possible interpretation of an ``optimal'' router is the one that maximizes the win rate against $q$ subject to the cost constraint; that is, for almost every $z$, this interpretation of $\pi^*(z)$ solves the following optimization problem:
\begin{equation}
    \label{problem:optimal-routing}
    \begin{aligned}
        \maximize_{\substack{ \tilde\pi \in \Delta^M }} \quad & \P_{A \sim q, B \sim \tilde\pi, Y \sim \mathrm{Bern}(\sigma(\theta^*(z)_B - \theta^*(z)_A))}(Y=1 \mid Z=z)\\
        \st \quad & \E_{B \sim \tilde\pi}[c_B] \leq C
    \end{aligned},
\end{equation}
In other words, the optimal router should maximize the average win rate against the opponent distribution $q$.

An alternative definition of the optimal router is the one that has the highest Bradley-Terry coefficient.
This version of the optimal policy has $\pi^*(z)$ equal (almost surely) to the solution to the following optimization problem: 
\begin{equation}
    \label{problem:optimal-routing-bt}
    \begin{aligned}
        \maximize_{\substack{ \tilde\pi \in \Delta^M }} & \quad \argmin_{\theta \in \R} \E_{\substack{B \sim \tilde\pi, A \sim q,\ifdefined\newlinetoggle\\\fi Y' \sim \mathrm{Bern}(\sigma(\theta^*(z)_B - \theta^*(z)_A))}} \Big[ \ell(\sigma(\theta - \theta^*(z)_A), Y') \mid Z = z\Big]\\
        \st & \quad \E_{B \sim \tilde\pi}[c_B] \leq C
    \end{aligned}.
\end{equation}
That is, considering the optimal router as a separate model, it should achieve the highest possible spot in the leaderboard subject to the cost constraint.

Surprisingly, although the optimization problems in~\eqref{problem:optimal-routing} and~\eqref{problem:optimal-routing-bt} look different, their optimal solution is the same under the Bradley-Terry model. 
The solution is given in Theorem~\ref{thm:optimal-router}.
The resulting problem has a linear objective and a linear constraint, and can be solved with any standard solver.
If the dominant model is below the cost of $C$, the policy will deterministically select that model (i.e., it will place probability $1$ on sampling that model).
Otherwise, it will hedge its bets and randomize over multiple models.
\begin{theorem}[Optimal prompt-dependent routing]
    \label{thm:optimal-router}
    Assume that for every prompt $z$, the Bradley-Terry model holds with coefficients $\theta^*(z)$. 
    Then, the optimization problems in~\eqref{problem:optimal-routing} and~\eqref{problem:optimal-routing-bt} are both equivalent to the following problem:
    \begin{equation} 
        \label{problem:optimal-routing-master}
        \begin{aligned}
            \minimize_{\substack{\tilde\pi \in \R^M }} \quad & -\tilde\pi^\top \mathbf{W}^*q\\
            \st \quad & \tilde\pi^\top c \leq C, \\
            & \mathbf{0}_M \preceq \tilde\pi \preceq \mathbf{1}_M \\
            & \tilde\pi^\top \mathbf{1}_M = 1,
        \end{aligned}
    \end{equation}
    where $\mathbf{W^*}$ represents the population win matrix, with entries $\mathbf{W}^*_{ba} = \sigma(\theta^*(z)_b - \theta^*(z)_a)$.
\end{theorem}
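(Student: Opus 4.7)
The plan is to treat \eqref{problem:optimal-routing} and \eqref{problem:optimal-routing-bt} as two separate reductions to the LP \eqref{problem:optimal-routing-master}. The simplex conditions $\mathbf{0}_M \preceq \tilde\pi \preceq \mathbf{1}_M$, $\tilde\pi^\top \mathbf{1}_M = 1$ together with the cost constraint $\tilde\pi^\top c \leq C$ appear verbatim in all three formulations, so the work is entirely in showing that each objective is a monotone transformation (in $\tilde\pi$) of $\tilde\pi^\top \mathbf{W}^* q$.

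The first reduction is immediate. By the tower property,
\begin{equation}
    \P_{A \sim q,\, B \sim \tilde\pi,\, Y}(Y=1 \mid Z=z) \;=\; \E_{A \sim q,\, B \sim \tilde\pi}[\sigma(\theta^*(z)_B - \theta^*(z)_A)] \;=\; \tilde\pi^\top \mathbf{W}^* q,
\end{equation}
by definition of $\mathbf{W}^*$. Maximizing this quantity is equivalent to minimizing its negation, which is exactly the objective in \eqref{problem:optimal-routing-master}.

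The second reduction requires more care. First, I would apply the same conditional-mean simplification used in Section~\ref{sec:aggregating} (which is valid because binary cross-entropy is linear in its label argument) to eliminate the Bernoulli $Y'$, yielding
\begin{equation}
    \theta^\dagger(\tilde\pi, z) \;:=\; \argmin_{\theta \in \R}\; \E_{A \sim q,\, B \sim \tilde\pi}\bigl[\ell\bigl(\sigma(\theta - \theta^*(z)_A),\, \sigma(\theta^*(z)_B - \theta^*(z)_A)\bigr)\bigr].
\end{equation}
Differentiating the integrand in $\theta$ and using the identity $\partial_u \ell(\sigma(u), p) = \sigma(u) - p$, the first-order condition collapses to
\begin{equation}
    \E_{A \sim q}[\sigma(\theta - \theta^*(z)_A)] \;=\; \tilde\pi^\top \mathbf{W}^* q.
\end{equation}
The left-hand side is strictly increasing in $\theta$ with range $(0,1)$, while the right-hand side is a convex combination of values in $(0,1)$, so $\theta^\dagger(\tilde\pi, z)$ exists, is unique, and is a strictly increasing function of $\tilde\pi^\top \mathbf{W}^* q$. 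Hence maximizing $\theta^\dagger(\tilde\pi, z)$ over $\tilde\pi$ under the simplex and cost constraints is equivalent to maximizing $\tilde\pi^\top \mathbf{W}^* q$, recovering \eqref{problem:optimal-routing-master} once again.

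The main obstacle is the second reduction: the punchline that the BT coefficient of the router is itself a monotone function of its average win rate against $q$ relies on three careful steps---the conditional-mean simplification, the correctly-signed first-order condition via $\partial_u \ell(\sigma(u),p) = \sigma(u) - p$, and the strict-monotonicity argument that pins down the unique root of the first-order condition. Once these are in hand, the equivalence between the two notions of ``optimal router'' follows almost for free, and the linearity of the resulting objective and constraints in $\tilde\pi$ makes \eqref{problem:optimal-routing-master} a standard linear program.
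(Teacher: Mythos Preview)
Your proposal is correct and matches the paper's own proof essentially step for step: the first reduction is immediate from the tower property, and the second uses linearity of the cross-entropy in its label to remove $Y'$, then the first-order condition $\E_{A\sim q}[\sigma(\theta-\theta^*(z)_A)]=\tilde\pi^\top\mathbf{W}^*q$, and finally strict monotonicity of the left-hand side in $\theta$ to conclude that maximizing the router's BT coefficient is equivalent to maximizing $\tilde\pi^\top\mathbf{W}^*q$. The paper's argument is the same, just with slightly different notation ($G$, $G^{-1}$, $R$) for the monotone map.
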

The proof is given in Appendix~\ref{app:proofs}.
It is important to note that deviations from the Bradley-Terry model---for example, any non-transitivity---will break this relationship.

Another benefit of this approach is that we are able to estimate the \emph{value} of the objective function of~\eqref{problem:optimal-routing-bt} via a standard root finder~\cite{brent1973-chapter4}, which means we can estimate the router's position on the leaderboard before deploying it.
We give this procedure in Algorithm~\ref{alg:nested-problem}.
It is justified by~\eqref{eq:router-first-order-condition} in the proof of Theorem~\ref{thm:optimal-router}.
    
\begin{algorithm}[H]
\caption{Optimal routing with BT estimate}
\begin{algorithmic}[1]
\REQUIRE $q$; $\mathbf{W}^*$; $\theta^*(z)_j$; $c$; $C$
\STATE Solve the LP:
\begin{equation}
\tilde\pi^* = \argmax_{\tilde\pi \in \Delta^M, \; \tilde\pi^\top c \le C} \tilde\pi^\top W^* q
\end{equation}
\STATE Compute $R^* = \tilde\pi^{*\top} W^* q$
\STATE Solve for $\theta'^*$ by finding the root of the following implicit equation:
\begin{equation}
\sum_a q_a\,\sigma\bigl(\theta - \theta^*(z)_a\bigr) = R^*
\end{equation}
\ENSURE Optimal router $\tilde\pi^*$, estimate of router's BT coefficient $\theta'^*$
\end{algorithmic}
\label{alg:nested-problem}
\end{algorithm}

\subsection{Prompt-to-Regression}
\label{sec:p2r}

Here, we give extensions of P2L beyond pairwise preference feedback.
This is useful because, in Chatbot Arena, the voting options are not just ``A is better'' and ``B is better''; they also include ``Tie'' and ``Tie (both bad)''. Thus, a P2L model that takes into account all this additional data may learn faster and also learn interesting signals about which prompts are hard and cause models to exhibit different behaviors or failures.
Fortunately, our toolkit generalizes to the case where $X$ is no longer a two-hot encoding and $Y$ is no longer binary.
In fact, our strategy encompasses any parametric statistical model relating $X$ and $Y$ conditionally on $Z$, regardless of the space in which they live.
We call this more general class of models \emph{prompt-to-regression} models.

More formally, let us model the distribution of $Y$ by saying that for all putative values $y$, 
\begin{equation}
    \label{eq:general-p2r}
    p_{Y=y\mid Z=z, X=x}(y) = g_{\theta^*(z)}(y; x),
\end{equation}
for some (unknown) vector of parameters $\theta^*(z)$.
Then, we fit $\hat\theta(z)$ by running maximum-likelihood estimation, i.e., maximizing  $\prod\limits_{i=1}^ng_{\theta(Z_i)}(Y_i;X_i)p_X(X_i)$.
As a familiar example, we can set $g_{\theta^*(z)}$ to a BT model relating $X$ and $Y$:
\begin{equation}
    g_{\theta(z)}(y; x) = \begin{cases}
        \sigma(x^\top \theta^*(z)) & y = 1, \\
        1-\sigma(x^\top \theta^*(z)) & y = 0.
    \end{cases}
\end{equation}
Note that the formulation of~\eqref{eq:general-p2r}, $Y$ and $X$ can be arbitrary, so long as we model their conditional relationship via $g_{\theta(z)}$.
Thus, the framework can admit real-valued feedback $Y$ via ordinary least squares, count feedback via Poisson regression, and so on.

As one example, we will consider incorporating ties via a Rao-Kupper~\citep{rao1967ties} model.
Let $X$ be a two-hot encoding, $Y \in \{\mathsf{A}, \mathsf{B}, \mathsf{tie}\}$, and
\begin{equation}
    g_{\theta^*(z)}(y ; x) = 
    \begin{cases}
        \sigma((x,-1)^\top \theta^*(z)) & y = \mathsf{B}, \\
        \sigma((-x,-1)^\top \theta^*(z)) &  y = \mathsf{A}, \\
        1 - \sigma((-x,-1)^\top \theta^*(z)) - \sigma((x,-1)^\top \theta^*(z)) & y = \mathsf{tie}.
    \end{cases}
\end{equation}

In this technique, $\theta^*(z)$ is an $(M+1)$-dimensional vector, the last entry of which encodes a tie coefficient.
The larger this prompt-dependent tie coefficient, the more likely the two models are to tie. 
Meanwhile, the first $M$ entries, $\hat\theta(z)_{1:M}$, comprise the leaderboard.

Finally, we consider how to handle the ``Tie (both bad)'' category.
For this, we developed a non-standard statistical model which we call the \emph{grounded} Rao-Kupper model. In this model, if both model coefficients are small, it increases the probability of ``Tie (both bad)''. 
Inspired by the Plackett-Luce model \citep{plackett1975analysis, luce1959individual}, we imagine the existence of a fictitious ``bad'' model with a coefficient of zero, and use this as a grounding point for the model coefficients.

Let $Y\in \{\mathsf{A}, \mathsf{B}, \mathsf{tie}, \mathsf{bad}\}$, and for the sake of notational convenience, let $\theta^*(z) = \big(\beta^*(z), \lambda^*(z)\big)$ where $\beta^*(z) \in \mathbb{R}^M$ and $\lambda^*(z) \in \mathbb{R}_{\geq 1}\}$. For notational convenience, we define $\varphi^*(z)_i := \exp(\beta^*(z)_i)$.
The grounded Rao-Kupper model is defined as:
\begin{equation}
    g_{\theta^*(z)}(y ; x) =
    \begin{cases}
        \frac{\varphi^*(z)_A}{\varphi^*(z)_A + \lambda^*(z)\varphi^*(z)_B + 1} &  y = \mathsf{A} \\
        \frac{\varphi^*(z)_B}{\varphi^*(z)_B + \lambda^*(z)\varphi^*(z)_A + 1} &  y = \mathsf{B}\\
        \frac{1}{1 + \varphi^*(z)_A + \varphi^*(z)_B} & y = \mathsf{bad}\\
        1 - \frac{\varphi^*(z)_A}{\varphi^*(z)_A + \lambda^*(z)\varphi^*(z)_B + 1} \\ \ \ \ - \frac{\varphi^*(z)_B}{\varphi^*(z)_B + \lambda^*(z)\varphi^*(z)_A + 1} - \frac{1}{1 + \varphi^*(z)_A + \varphi^*(z)_B} & y = \mathsf{tie}.
    \end{cases}
\label{eq:grounded-rk}
\end{equation}
This model allows us to make efficient use of all data collected on Chatbot Arena by incorporating all votes.
It also has the additional advantage that models with higher coefficients have a lower probability of being labeled ``Tie (both bad)''.
Thus, the raw coefficient value of a model speaks to its absolute quality, as opposed to its comparative quality against other LLMs as in the BT model.

\section{Experiments}
\label{sec:experiments}
This section contains a suite of experiments that validate the P2L method and demonstrate its utility.
In Section~\ref{sec:validation}, we show that P2L leads to gains in human preference prediction that scale with model size and data.
In Section~\ref{sec:validation}, we show direct predictive performance on pairwise human preferences, as well as scaling behavior with data size and parameter count.
In Section~\ref{sec:routing-experiments}, we show P2L allows for optimal cost-efficient routing via the algorithm developed previously in Section~\ref{sec:routing}.
In Section~\ref{sec:strengths-weaknesses}, we use P2L to automatically identify strengths and weaknesses for different models.
In Section~\ref{sec:aggregation-scaling}, we explore our aggregation technique against ground truth categories leaderboards, and observe data scaling trends.
Finally, in Section~\ref{sec:out-of-distribution}, we show that the P2L has reasonable performance on out-of-distribution data.

\subsection{Training setup}

To train a P2L model, we follow this three-step procedure:
\begin{enumerate}
    \item Begin with a pre-trained, instruction-tuned LLM.
    \item Remove the existing language model head and replace it with a randomly initialized \emph{coefficient head}. In the BT case, the coefficient head is a linear layer producing $M$ outputs, one per model.
    \item Train the model by running stochastic gradient descent to minimize the negative log-likelihood:
    \begin{equation}
        \mathcal{L}(\theta) = -\sum\limits_{i=1}^n \log \left( g_{\theta(Z_i)}(Y_i;X_i) \right).
    \end{equation}
\end{enumerate}
The result of this procedure is the trained model
\begin{equation}
    \hat\theta = \argmin_{\theta \in \Theta} \mathcal{L}(\theta),
\end{equation}
which is a direct generalization of~\eqref{eq:p2l-bt}.
We train on up to $n=1.5$ million crowdsourced human preference pairs from Chatbot Arena, containing $M=130$ unique models. Note that we find minimal left/right positional bias from voters.
We always train for 1 epoch. 
In order to study the scaling laws of P2L as a function of model size, we used the following models as the initializations: \texttt{SmolLM2-\{135, 360\}M-Instruct} and \texttt{Qwen2.5-\{0.5, 1.5, 3, 7\}B-Instruct}~\citep{allal2024SmolLM2, qwen2.5}.
We refer to our post-trained versions of these models as \texttt{P2L-\{135,360\}M} and \texttt{P2L-\{0.5,1.5,3,7\}B}, respectively.

\subsection{Feedback prediction}
\label{sec:validation}

\begin{figure}[t]
    \centering
    \includegraphics[width=1.0\linewidth]{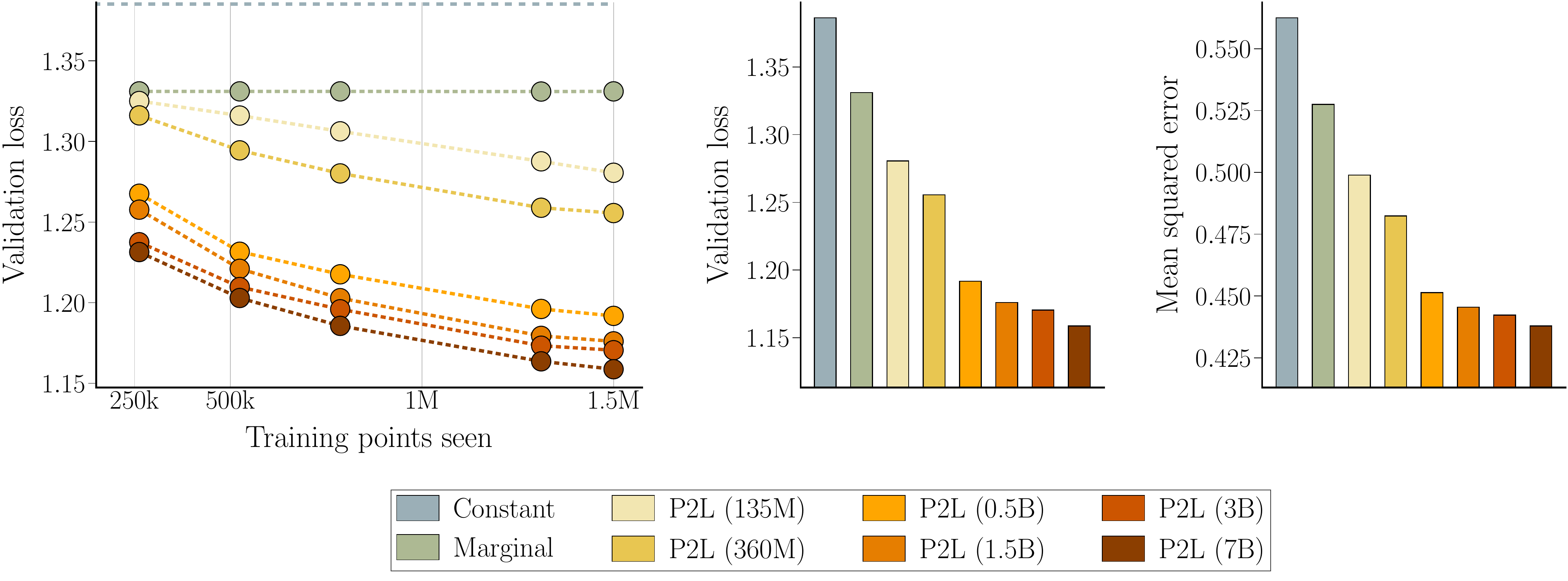}
    \caption{\textbf{Loss metrics.} The line plot shows the validation loss as a function of the number of data points seen during training. The P2L models all substantially outperform the baselines, and performance scales with dataset and model size. The bar plots show the validation loss and mean squared error of the models trained on all 1.5M training points.}
    \label{fig:loss-metrics}
    \vspace{-5.0pt}
\end{figure}

We begin by evaluating P2L on its ability to predict human feedback on a prompt-by-prompt basis.
In other words, given two models and a prompt, we ask how effectively P2L can predict which model will win on that prompt.
These experiments measure the ability of P2L to accurately assess relative model quality on a prompt-by-prompt basis.

In this section, we evaluate the ability of P2L to predict human preferences on Chatbot Arena.
We construct a holdout validation set containing 41,507 annotated pairwise comparisons across 34 well-used models.
We then measure the negative log-likelihood (validation loss) on this dataset; a lower validation loss indicates better preference prediction performance.

Figure~\ref{fig:loss-metrics} shows the results of our procedure against two baselines.
First, we include the constant predictor that gives an equal probability of all preference outcomes; this is an extremely weak baseline akin to flipping a coin to decide the winner.
Second, we include the average (``marginal'') leaderboard.
For P2L, we show a ladder of increasing model and dataset sizes.
The more data is used to train P2L, the better the preference predictions become.
Notably, the gap between the best P2L leaderboard and the marginal model is several times the gap between the marginal leaderboard and the constant predictor.
This indicates that by capturing the prompt-dependent differences in model performance, P2L is able to produce much better predictions of human preference.

\subsection{Optimal routing}
\label{sec:routing-experiments}
\begin{figure}[ht]
    \centering
    \includegraphics[width=0.8\columnwidth]{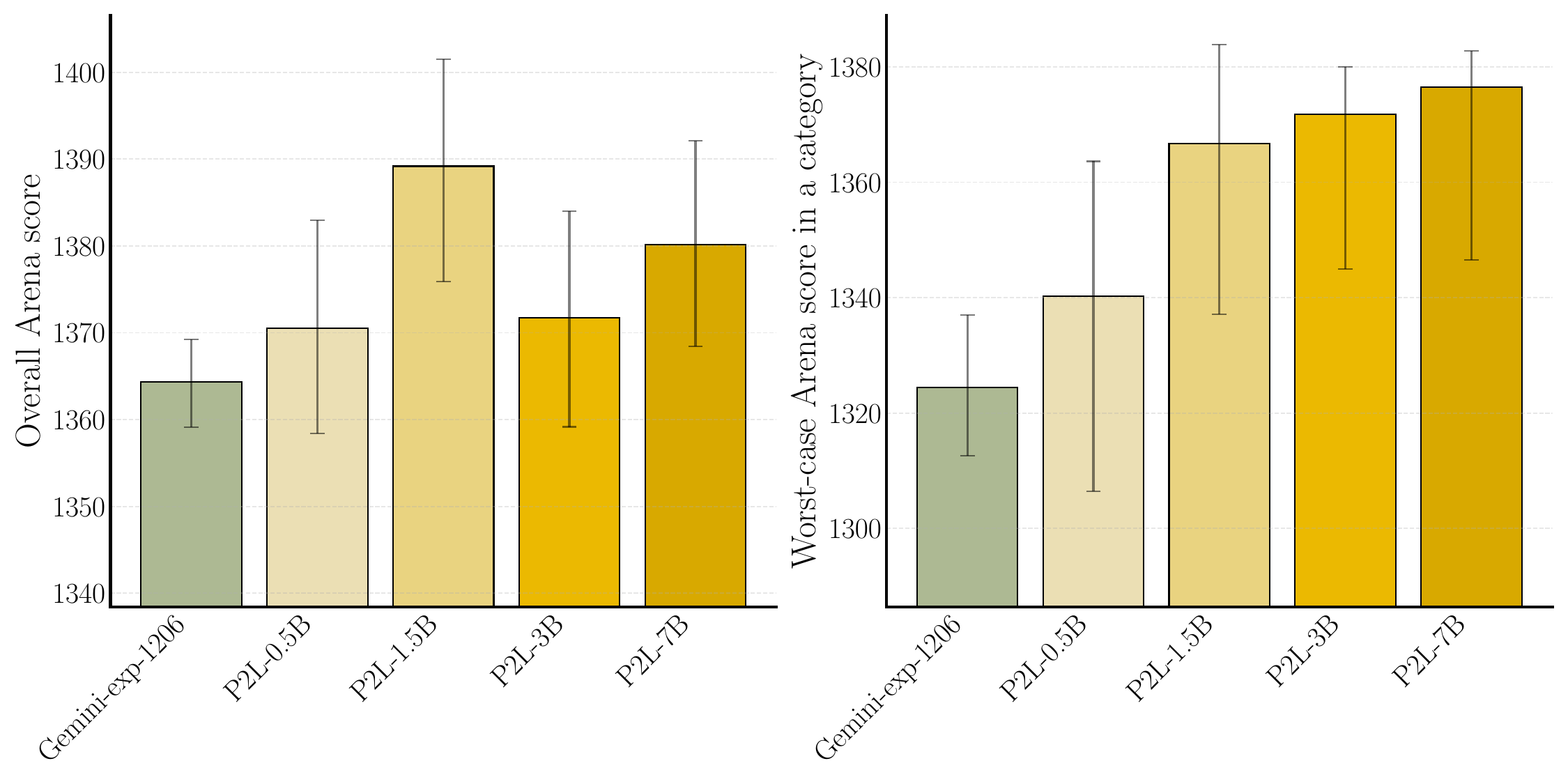}
    \caption{\textbf{P2L router performance} on Chatbot Arena. The left barplot shows the overall score of the router after it was deployed prospectively on Chatbot Arena. The right barplot shows the worst-case category score on Chatbot Arena. Overall, larger models lead to higher Arena scores, i.e., better routers. The exception is \texttt{P2L-1.5B}, which has a large bump in overall performance. However, the confidence intervals indicate that this bump is explainable by statistical variations in its BT coefficient estimate.}
    \label{fig:scores-scaling}
    \vspace{-5.0pt}
\end{figure}

Next, we evaluate the performance of the optimal router based on P2L as derived in Section~\ref{sec:routing}.
Our evaluations are based on prospective deployments of our router to Chatbot Arena.
We treat the router as a separate model.

\subsubsection{Unconstrained routing}

\begin{figure}[t]
    \centering
    \includegraphics[width=0.7\linewidth]{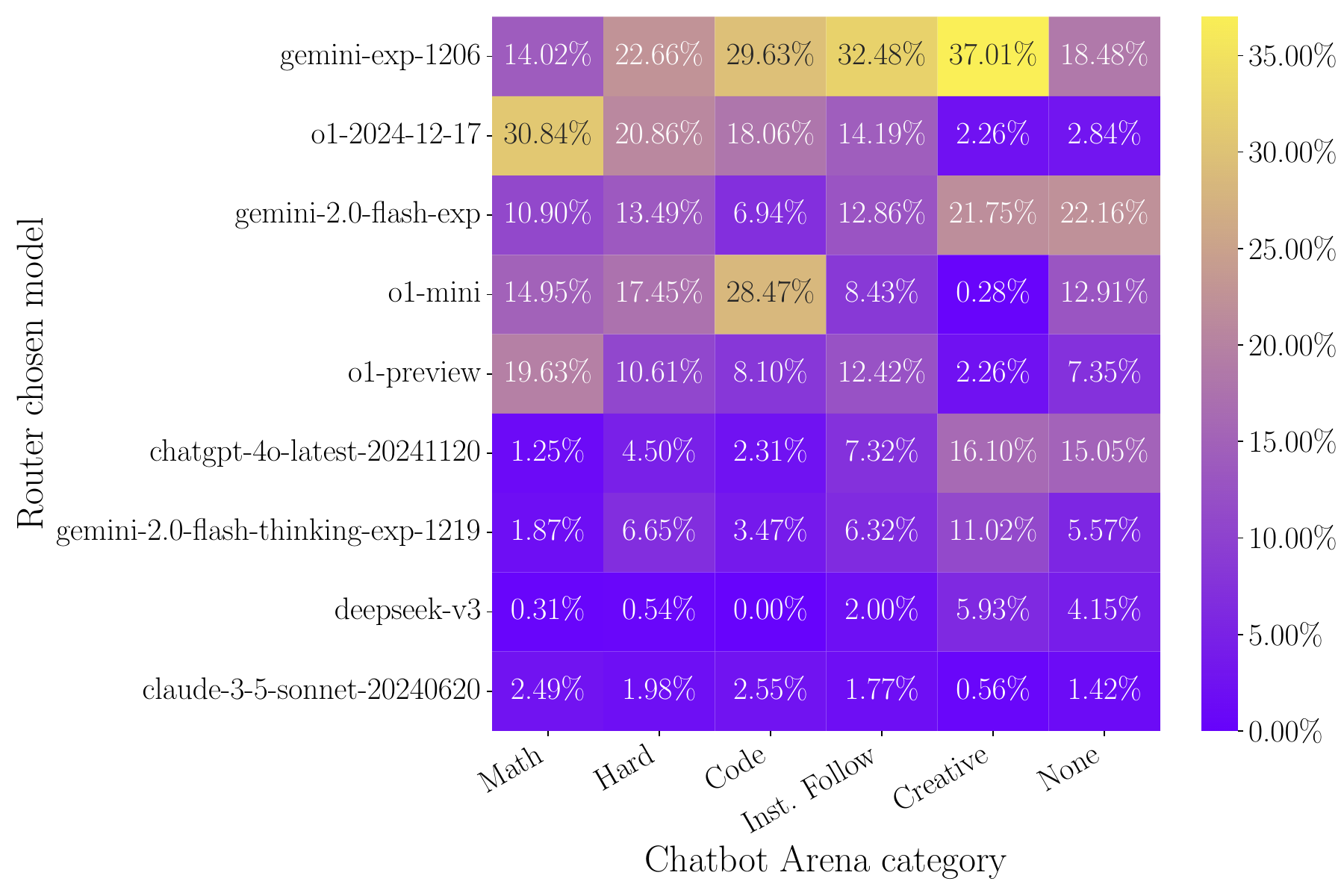}
    \caption{\textbf{Router model choice distribution} in each prompt category. The rows are different models, and the columns are different categories. Each cell represents the probability that the model was selected within that category (i.e., columns sum to 1). Models with an average selection rate below 1\% are not shown.}
    \label{fig:routing_heatmap}
\end{figure}

We deployed the grounded Rao-Kupper versions of \texttt{P2L-0.5B}, \texttt{P2L-1.5B}, \texttt{P2L-3B}, and \texttt{P2L-7B} onto Chatbot Arena, crowdsourcing a total of 8,616 pairwise comparisons between P2L models and public models hosted on Chatbot Arena.
The P2L models routed between 34 models, including top models such as \texttt{Gemini-exp-1206}, \texttt{o1-2024-12-17}, and \texttt{ChatGPT-4o-20241120} as well as other models.
(See Appendix~\ref{app:model-list} for a full model list.)

Because there is no cost-constraint, the P2L router always picks the highest-ranked model conditionally on the prompt, i.e., the highest entry in $\hat\theta(z)$. 
Marginally, the strongest singular candidate model in the P2L router was \texttt{Gemini-exp-1206}, with a score of 1364. 

As shown in the top plot in Figure~\ref{fig:scores-scaling}, all P2L routers, regardless of parameter count, outperformed \texttt{Gemini-exp-1206}.
The best model, \texttt{P2L-1.5B}, reached \#1 on Chatbot Arena during our testing period with a score of 1389. 
This shows the utility of P2L: differences in model performance on a prompt-by-prompt basis allow P2L to outperform all individual LLMs.

Next, we discuss scaling performance with respect to the Arena score of the router.
We see a general trend in Figure~\ref{fig:scores-scaling} that bigger models do better overall.
The exception is \texttt{P2L-1.5B}, whose performance was unexplainably strong; otherwise, the trend holds.
We also tested other metrics, such as worst-case performance  (bottom of Figure~\ref{fig:scores-scaling}).
The worst-case performance of P2L scales with parameter count as expected, and is uniformly much better than that of the marginal leaderboard.

We also observe that the gap between the P2L routers and static models is large.
The P2L routers are able to avoid per-prompt model weaknesses and route elsewhere. In fact, the gap between the best P2L router and the best non-routed static model in the overall comparison was 25 points, while this gap grew to 51 points in the minimum category performance case. Figure~\ref{fig:routing_heatmap} shows \texttt{P2L-7B}'s routing distribution conditioned on each Chatbot Arena category. Notably, we see relatively diverse routing patterns, even within a single category. We also observe intuitive behavior patterns, such that heavily routing to \texttt{o1-2024-12-17} for math prompts and \texttt{Gemini-exp-1206} for creative prompts.

\subsubsection{Cost-optimal routing}
\label{sec:cost_routing}
We show results of the optimal routing procedure detailed in Theorem~\ref{thm:optimal-router} with a \texttt{P2L-7B} model on Chatbot Arena.
Here, we use P2L to route between \texttt{o1-mini}, \texttt{gpt-4o-2025-05-13}, \texttt{claude-3-5-sonnet-20240620}, \texttt{gemini-1.5-pro-001}, \texttt{mistral-large-2407}, \texttt{claude-3-5-haiku-20241022}, and \texttt{gemini-1.5-flash-001} and with budgets of \texttt{\{0.00218, 0.0044, 0.00675, 0.00945, 0.0123, $\infty$\}}. To get reasonable cost estimates, we calculate the expected cost per query with $c_i = O_i*\mathbb{E}[T_i]$ for all models $i\in[M]$, where $O_i$ is the output cost per token of model $i$, and $T_i$ is a random variable representing the number of tokens in a response from model $i$. We estimate $\mathbb{E}[T_i]$ as the response token length mean overall responses from model $i$ in Chatbot Arena. Additionally, we estimate $q$ in Theorem~\ref{thm:optimal-router} according to the Chatbot Arena model sampling distribution. We find the P2L router performs well, with Pareto frontier Arena score versus cost. Furthermore, on the right plot in Figure~\ref{fig:routing-metrics} we find the P2L router continues to show dominant performance in Chatbot Arena's creative category despite large shifts in individual model performances.

\begin{figure}[t]
    \centering
    \includegraphics[width=1.0\linewidth]{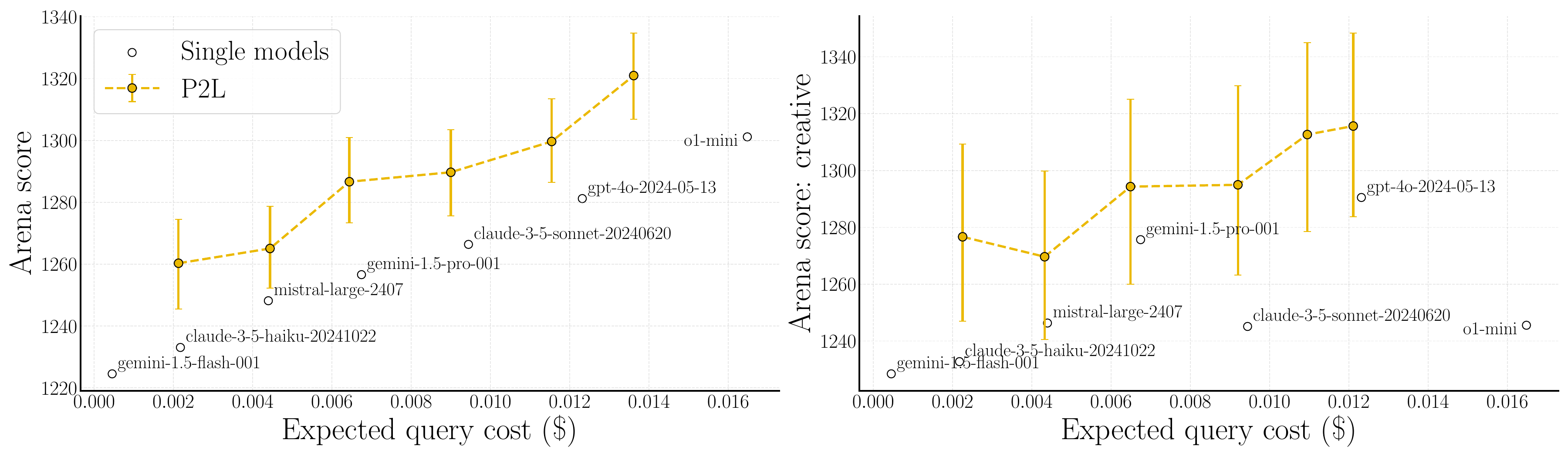}
    \caption{\textbf{Arena score versus cost}. Both plots show routing performance as a function of average cost. The left plot shows the averaged performance across all categories, and the right plot shows the performance in the creative writing category. The black open circles give the raw performance and cost of the models used by the router. Each gold dot represents the Arena score of the  \texttt{P2L-7B} router as a function of the cost constraint in~\eqref{problem:optimal-routing-master}. The plots show that the P2L router dominates and substantially improves the cost-performance Pareto frontier. All confidence intervals are 95\%.}
    \label{fig:routing-metrics}
\end{figure}

% \begin{figure}[ht]
%     \centering
%     \includegraphics[width=\linewidth]{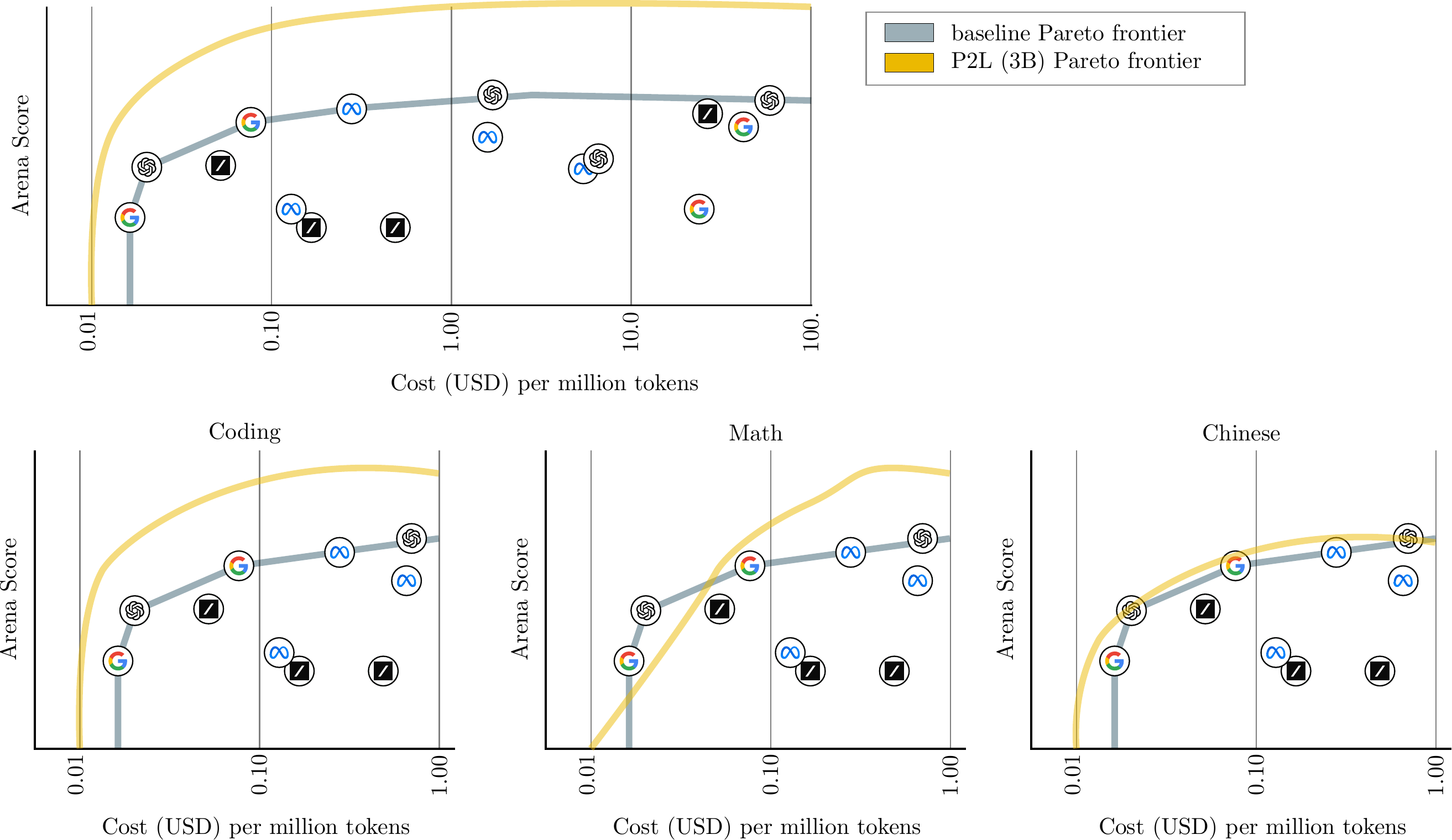}
%     \caption{\textbf{Cost-sensitive routing metrics.}}
%     \label{fig:cost-routing-metrics}
% \end{figure}

\subsection{Testing for regression and strength/weakness analysis}
\label{sec:strengths-weaknesses}
An important question when developing models is to understand their category-level performance, along with strengths and weaknesses.
Imagine, for example, a business seeking to upgrade their workflow to a cheaper or newer (and presumably more advanced) model.
In such a business, testing for regression of the model to a worse performance may be important. 
For example, they might ask the question: if I switch from \texttt{GPT-4o} to \texttt{GPT-4o-mini}, can I do so safely, and will my performance get worse on my customers?

This is a challenging question to answer because it requires knowledge of the enterprise's customer distribution which may require lengthy instrumentation and data collection procedures.
However, P2L provides a partial solution to this problem.
Given a large unlabeled dataset of prompts (e.g., customer use-cases), we seek to: (1) Categorize these prompts automatically using an LLM. (2) Produce a preference leaderboard within each category, and (3) On a per-model basis, analyze for which categories it is weak and strong (relative to itself or its competition).

For this, we can use a hierarchical clustering approach. Assume access to a multilevel hierarchical categorization of prompts (this can be obtained from an LLM). 
That is, we have a function $\mathsf{categorize}$ that takes in a prompt $z$ and an integer level $l$ and outputs a category in $\{1, \ldots, k_l\}$, for some integer $k_l$.
Given a set of prompts, $\mathcal{Z}^{\rm category}$, we can compute a per-category leaderboard using $\tilde{\theta}(\mathrm{unif}(\mathcal{Z}))$ as in~\eqref{eq:efficient-aggregation-function}.
Note that the finest-grained categories may have very little data, motivating the need for P2L.

Figure~\ref{fig:analysis} shows an example analysis of five different OpenAI models. Here, the percentages are calculated as the win rate against \texttt{GPT-4o-2024-05-13} under the BT model.
According to \texttt{P2L-7B}, OpenAI models' performance varies across different categories and topic clusters. 
While \texttt{o1} might be a better model on average, it is essentially the same compared to \texttt{GPT-4o-mini} on certain creativity tasks. In math flavored tasks, the gap widens significantly.
See Figures~\ref{fig:llama-regression-1} and~\ref{fig:llama-regression-2} for similar and more detailed plots on Llama 3 fine-tunes.
We also include a variant of our regression analysis under the grounded RK model from ~\eqref{eq:grounded-rk}; this provides guidance as to the absolute reliability of the model, not just preference over alternative models; see Figure~\ref{fig:analysis-grounded-rk}.

\begin{figure}[t]
    \centering
    \includegraphics[width=\columnwidth]{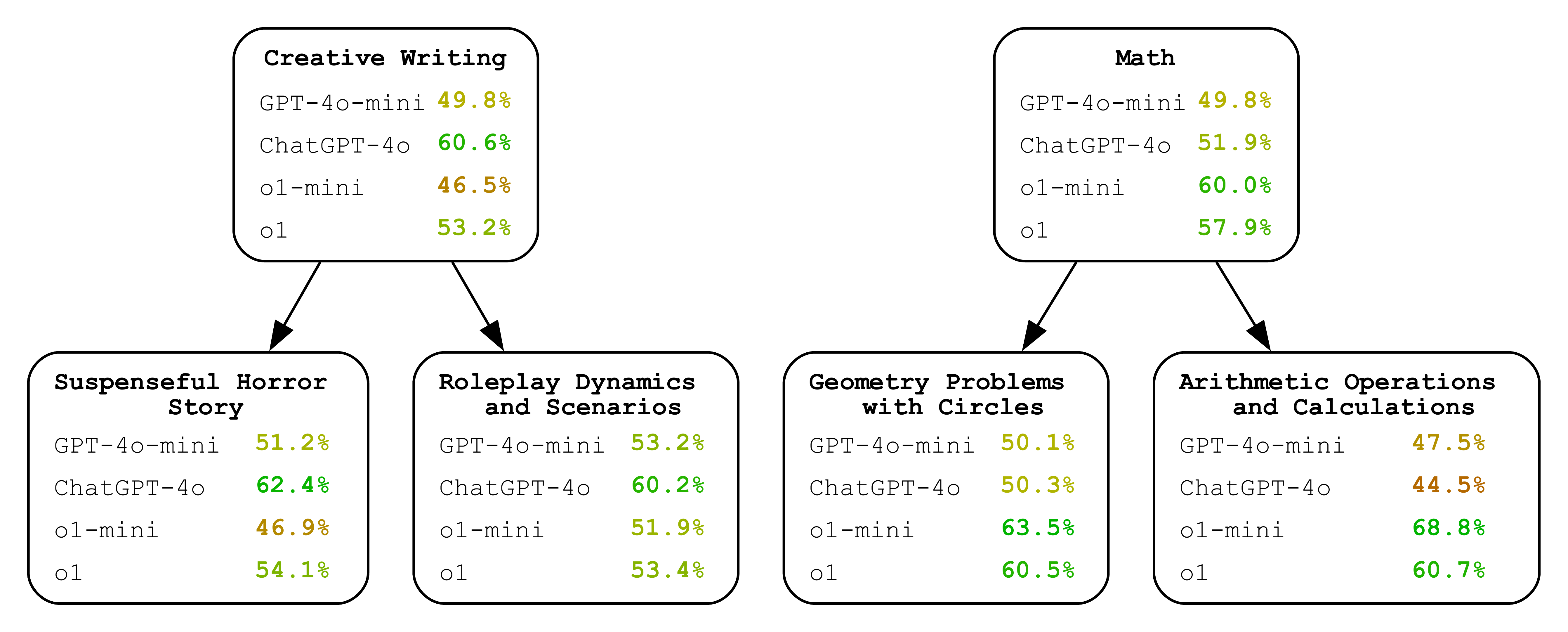}
    \caption{\textbf{Regression test.} We show the strengths of different OpenAI models on various topic clusters based on their win rate against \texttt{GPT-4o-2024-05-13} as predicted by \texttt{P2L-7B}. For each category, we show the probability a given model wins against \texttt{GPT-4o-2024-05-13} under the BT model. The results show strong category-specific variability in performance; for example, \texttt{o1-mini} is substantially better than \texttt{GPT-4o-2024-05-13} in ``Arithmetic Operations and Calculations'' but substantially worse when asked to write a ``Suspenseful Horror Story''.}
    \label{fig:analysis}
\end{figure}

% \subsection{Verifiable Benchmarks}
% \label{sec:benchmarks}

% 1) achieve best score on verifiable benchmarks

% 2) reproduce benchmark results without annotations.

% Benchmarks---datasets of prompts and fixed reference labels---have always been important for assessing AI systems, and they are becoming increasingly more so.
% Simultaneously, they are becoming increasingly expensive.
% GPQA, for example, is a 500-question dataset that cost about \$120,000 to collect.
% The cost per-datapoint is only going up, as we expect LLMs to excel at ever-narrower expert tasks.

% What if we could bypass the need to label these benchmarks altogether?
% That is, from a large corpus of unlabeled prompts, can we estimate what the ranking of our models would have been on the resulting benchmark, without ever collecting labels?

\subsection{Aggregation scaling}
\label{sec:aggregation-scaling}

\begin{figure}[t]
    \centering
    \includegraphics[width=0.8\linewidth]{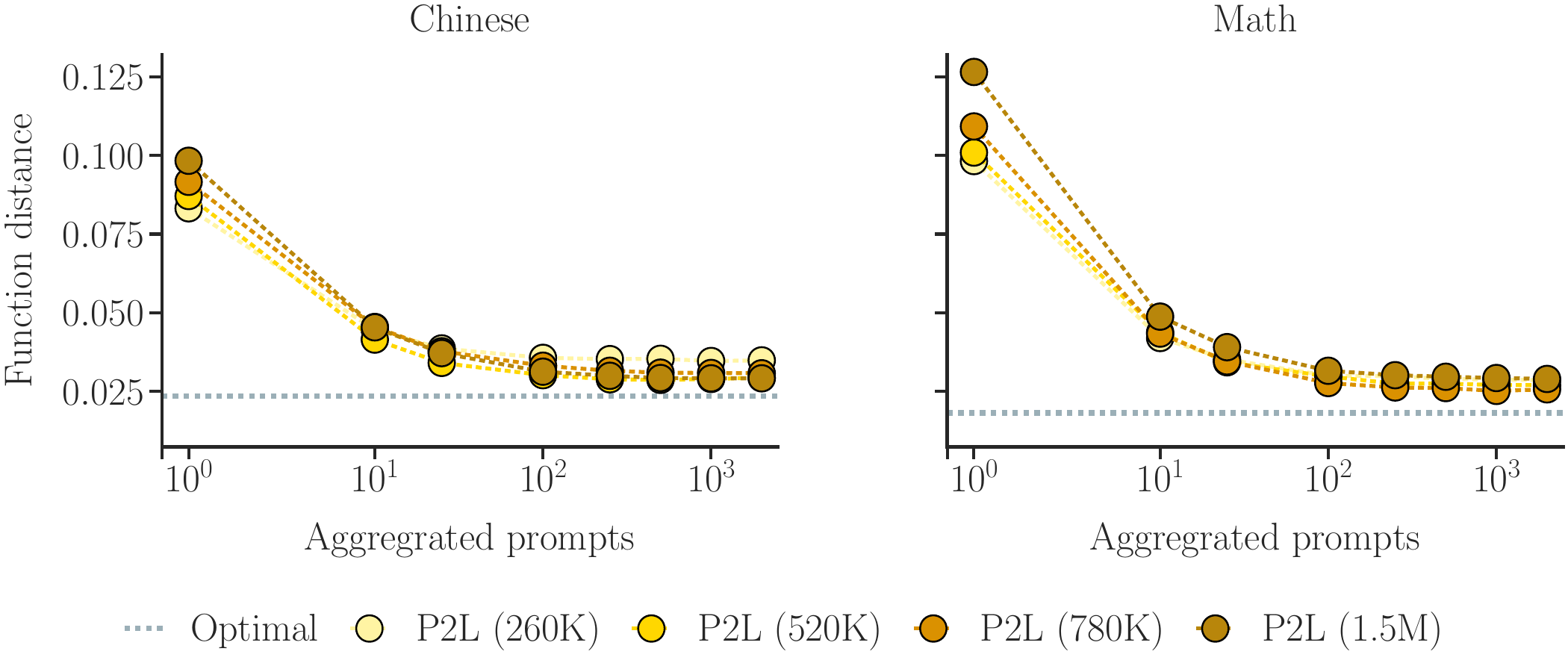}
    \caption{\textbf{Aggregation scaling.} The L1 distance between the aggregated leaderboard and the marginal BT regression as a function of the number of randomly sampled and aggregated datapoints in two categories: Chinese (left) and Math (right). The L1 distance plateaus at the optimal performance, which is around 0.025/0.015.
    A nonzero optimal distance is expected because the empirical BT coefficients are derived from a finite validation sample, and so these coefficients have their own irreducible statistical error.
    Thus, the P2L estimate converges to a near-optimal solution with increased data.}
    \label{fig:aggr_scale}
    \vspace{-5.0pt}
\end{figure}

Given a distribution of prompts, we aim to evaluate how P2L behaves using the aggregation technique described in \ref{sec:aggregating}. Specifically, we analyze how P2L’s aggregated leaderboards compare to ground truth category leaderboards as well as how this relationship scales with data. First, we calculate ground truth leaderboards over a large category from the validation set with marginal regression. We then aggregate P2L over increasing subsets of this category's prompts. Lastly, we plot the L1 function distance between the aggregated leaderboard’s predicted probabilities and the ground truth leaderboard’s predicted probabilities as subset size increases. Since both the train and validation set are drawn from the same distribution, we denote the optimal value to be the L1 function distance between the ground truth category leaderboard and the category leaderboard derived from marginal regression on the train set. 

In contrast to marginal regression, which requires thousands of prompts for a stable leaderboard, P2L converges near this optimal value within 100-250 prompts (Figure~\ref{fig:aggr_scale}). Here, we see P2L’s potential to create accurate aggregated leaderboards efficiently, while also reinforcing the validity of its per prompt outputs. Furthermore, as we scale the amount of training data seen, P2L’s predictions over singular prompts differ more drastically from category leaderboards while still converging with more prompts (Figure~\ref{fig:aggr_scale}). A clear scaling law ensues, as increased data allows P2L to make more distinguished individual leaderboards while still maintaining its aggregation ability at the category level.

 \subsection{Performance on out-of-distribution prompts}
\label{sec:out-of-distribution}

To assess how P2L generalizes to unseen prompts, we evaluate it on LiveBench  \citep{white2024livebench}, a verifiable, contamination-free benchmark with 1{,}000 questions covering diverse categories (e.g., math, coding, reasoning). 
Unlike Chatbot Arena, it utilizes objective success metrics.
We restrict our evaluation to a smaller pool of models. Among these models, P2L selects its candidate models for each question based on the predicted prompt-specific performance and then uses the output of the chosen model as the final answer. Table~\ref{tab:livebench_table} shows that \texttt{P2L-7B} surpasses every static baseline among the model subset, achieving an overall LiveBench score of 59.275. Even far smaller versions (e.g., 1.5B) match or exceed top static models, demonstrating that preference-trained routing generalizes well to an out-of-distribution, ground-truth benchmark.

Many real-world deployments require balancing model performance against inference costs. To examine this trade-off, we apply Prompt2Leaderboard to LiveBench at various cost thresholds (e.g., \$2, \$5, \$10, \$15 per million tokens) using the cost-optimal routing method discussed in Section~\ref{sec:cost_routing}.
Figure~\ref{fig:livebench_cost_fig} (in the appendix) shows that, in all budgets tested, the P2L cost-aware router consistently scores higher or comparable LiveBench scores to the best-performing model within that specific cost threshold. These gains are most pronounced when the budget permits occasional routing to a more expensive (and often stronger) model for prompts that particularly benefit from it. Thus, even under strict monetary constraints, P2L’s flexible prompt-level routing remains a powerful approach to maximizing performance on challenging out-of-distribution tasks.

\section{Discussion and related work}

This work develops fundamental tools for granular and query-specific evaluations in all evaluation tasks. 
Although our experiments are largely based on Chatbot Arena, this is not the only evaluation that could benefit from P2L.
As discussed in Section~\ref{sec:methods}, any feedback signal can be accommodated.
Thus, our techniques would equally work well for other evaluations~\cite{hendrycks2020measuring, zellers2019hellaswag, cobbe2021training, srivastava2023beyond, zhong2023agieval, chen2021evaluating, lin2023toxicchat, liang2022helm} as well as cost and latency prediction.

\textbf{Modeling human preference.} During Reinforcement Learning from Human Feedback (RLHF), a reward model is often trained as a proxy to human preference. Similar to P2L, reward model training may use a contrastive pairwise or $K$-wise loss, for example using the BT model \citep{christiano2023deep, bai2022training, ouyang2022training, zhu2023principled}. However, reward models are agnostic to model identity, requiring a prompt and response to return a single score for the response. P2L, which is aware of model identities, instead seeks to output expected model response quality, conditioned on input prompt, instantly generating a full leaderboard over all models without requiring model responses to be generated. This yields efficient leaderboard creation over arbitrary prompt sets.

\textbf{Meta-learning.} P2L is related to meta learning~\cite{schmidhuber1987evolutionary, santoro2016meta, finn2017model} insofar as we are training a model to output models. For example, we have discussed training an LLM (the meta-learner) to output coefficients of a BT regression (the learner). However, the meta-learning literature primarily focuses on learners that are deep neural networks. Instead, we let the learner be an extremely simple statistical model that is used for inference.

\textbf{Routing.} Prior work on routing LLM queries optimizes trade-offs between cost and performance, typically through classifiers or gating mechanisms. RouteLLM \citep{ong2024routellm} and AutoMix \citep{madaan2023automix} train binary classifiers to decide between a strong and weak model, while LLM-Blender \citep{jiang2023llm} ranks candidate responses and blends them. Hybrid LLM \citep{ding2024hybrid} selects between cloud and edge models based on predicted query difficulty. RouterDC~\cite{chen2024routerdcquerybasedrouterdual} uses contrastive losses to train a query-based router. Unlike these approaches, which operate over a small fixed set of models, P2L learns a parametric function mapping prompts to full model leaderboards, enabling flexible selection across large model pools. Its statistical structure supports efficient cost-aware routing, outperforming static models in live crowdsourced settings while scaling to personalized and task-specific selections.
An interesting extension of P2L would be to minimize the cost subject to a performance constraint, instead of maximizing performance subject to a cost constraint as we do herein.

\textbf{Parametric statistical models.} Our work builds on classic log-linear models and GLMs, like those of~\citet{bradley1952rank, rao1967ties}; see~\cite{mccullagh2019generalized} for a review, and~\cite{ameli2024statistical} for further extensions that enrich this model class for better LLM ranking.
The closest piece of work to ours is~\citet{hastie1993varying}, which proposes varying-coefficient models.
P2L can be seen as a subclass of varying-coefficient models.
To our knowledge, ours is the first work to parameterize such a model via a foundation model and backpropagate it end-to-end, while the techniques in~\citet{hastie1993varying} use bespoke fitting procedures and simpler statistical models than LLMs.

\bibliographystyle{plainnat}
\bibliography{bibliography}

\clearpage
\appendix
\section{Proofs}
\label{app:proofs}

\begin{proof}[Proof of Theorem~\ref{thm:optimal-router}]
    The equivalence of~\eqref{problem:optimal-routing} and~\eqref{problem:optimal-routing-master} is immediate.
    Proving the equivalence of~\eqref{problem:optimal-routing-bt} and~\eqref{problem:optimal-routing-master} is more challenging, and we focus there.
    
    We begin by simplifying the expressions in~\eqref{problem:optimal-routing-bt}.
    The cost constraint can be succinctly written as $\tilde\pi^\top c \leq C$.
    Regarding the objective, because the binary cross-entropy loss is linear in the response,
    \begin{multline}
        \E_{B \sim \tilde\pi, A \sim q, Y' \sim \mathrm{Bern}(\sigma(\theta^*(z)_B - \theta^*(z)_A))}\left[ \ell(\sigma(\theta - \theta^*(z)_A), Y') \mid Z = z\right] \\
        = \E_{B \sim \tilde\pi, A \sim q}\left[ \ell(\sigma(\theta - \theta^*(z)_A), \sigma(\theta^*(z)_B - \theta^*(z)_A)) \mid Z = z\right] \\
        = \E_{A \sim q}\left[ \ell\left(\sigma(\theta - \theta^*(z)_A), \left(\tilde\pi^\top \mathbf{W}^*\right)_A\right) \Bigg\vert Z = z\right],
    \end{multline}
    where again $\mathbf{W^*}$ represents the population win matrix, with entries $\mathbf{W}^*_{ba} = \sigma(\theta^*(z)_b - \theta^*(z)_a)$.
    Thus, the optimization problem in~\eqref{problem:optimal-routing-bt} can be equivalently rewritten as
    \begin{equation}
    \label{eq:optimal-routing-nested}
    \begin{aligned}
        \maximize_{\substack{ \tilde\pi \in \Delta^M }} \quad 
            & \theta'(\tilde\pi) 
            \quad \text{subject to}\quad
            \tilde\pi^\top c \le C,
    \end{aligned}
    \end{equation}
    where 
    \begin{equation}
    \theta'(\tilde\pi) 
    = 
    \argmin_{\theta \in \mathbb{R}}
    \;
    \E_{A \sim q}\Bigl[
      \ell\Bigl(\sigma(\theta - \theta^*(z)_A),\,
        (\tilde\pi^\top \mathbf{W}^*)_A
      \Bigr)
    \Bigr].
    \end{equation}
    Examining the first-order conditions of the inner optimization problem for $\theta'(\tilde\pi)$ shows that the solution satisfies
    \begin{equation}
        \label{eq:router-first-order-condition}
    \sum_A q_A\,\sigma\bigl(\theta'(\tilde\pi)-\theta^*(z)_A\bigr)
    = \tilde\pi^\top \mathbf{W}^*q.
    \end{equation}
    Define
    \begin{equation}
    R(\tilde\pi) = \tilde\pi^\top \mathbf{W}^*q, \qquad G(\theta) = \sum_A q_A\,\sigma(\theta - \theta^*(z)_A).
    \end{equation}
    Then $\theta'(\tilde\pi) = G^{-1}(R(\tilde\pi))$.
    Since $G^{-1}$ is strictly increasing,
    \begin{equation}
    \maximize_{\tilde\pi} \theta'(\tilde\pi) \quad \Longleftrightarrow \quad \maximize_{\tilde\pi} R(\tilde\pi).
    \end{equation}
    Thus, the problem reduces to:
    \begin{equation}
    \maximize_{\tilde\pi \in \Delta^M, \; \tilde\pi^\top c \le C} \tilde\pi^\top \mathbf{W}^*q,
    \end{equation}
    which is exactly the problem in~\eqref{problem:optimal-routing-master}.
\end{proof}

\section{Additional theory}
\label{app:theory}

\subsection{Aggregating leaderboards via averaging}
\label{app:aggregating-averaging}

The BT model tells us that for all $z \in \Z$,
\begin{equation}
    \log\left(\frac{\P(Y=1 \mid X=x, Z=z)}{1-\P(Y=1 \mid X=x, Z=z)}\right) = x^\top \theta^*(z).
\end{equation}
Thus,
\begin{equation}
    \E_{Z \sim Q}\left[\log\left(\frac{\P(Y=1 \mid X=x, Z)}{1-\P(Y=1 \mid X=x, Z)}\right)\right] = x^{\top}\left(\underbrace{\int_{z \in \cZ} \theta^*(z) dQ(z) }_{\tilde\theta(Q)}\right).
\end{equation}
That is, taking a (weighted) average of the values of $\theta^*(z)$ leads to a predictor of the expected log-odds.

This method has two downsides: firstly, increasing the $m$th coordinate of $\tilde{\theta}(Q)$ does not mean that model $m$ is more likely to win against other models on average.
Secondly, the function $\tilde\theta(Q)$ does not have a simple relationship with the win rate.
This motivates the need for the aggregation metric from Section~\ref{sec:aggregating}.

% \begin{figure}
%     \centering
%     \includegraphics[width=0.5\linewidth]{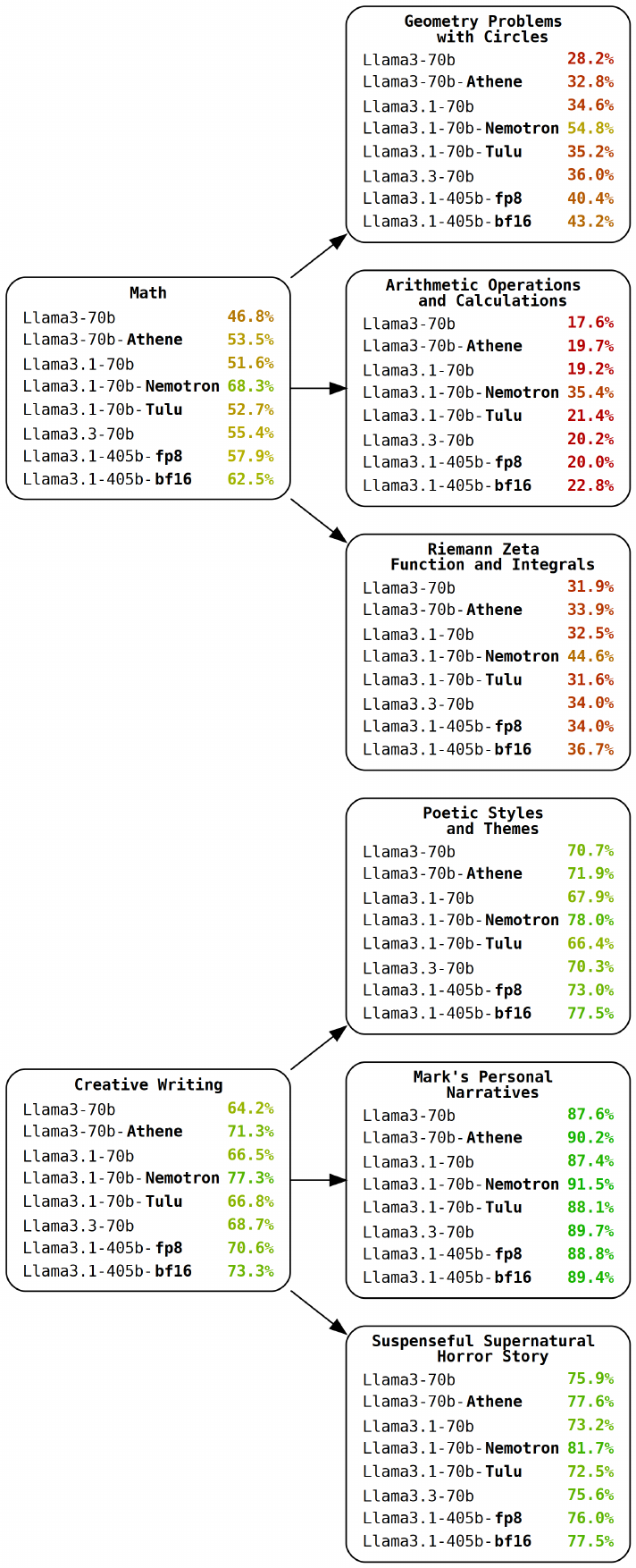}
%     \caption{\textbf{Regression Test.} Llama 3 Fine-tunes Part 1. \ana{Horizontal version}}
%     \label{fig:llama-regression-1}
% \end{figure}

\section{Additional regression tests}
\label{app:more-strength-weakness}

\begin{figure}[H]
    \centering
    \includegraphics[width=1\linewidth]{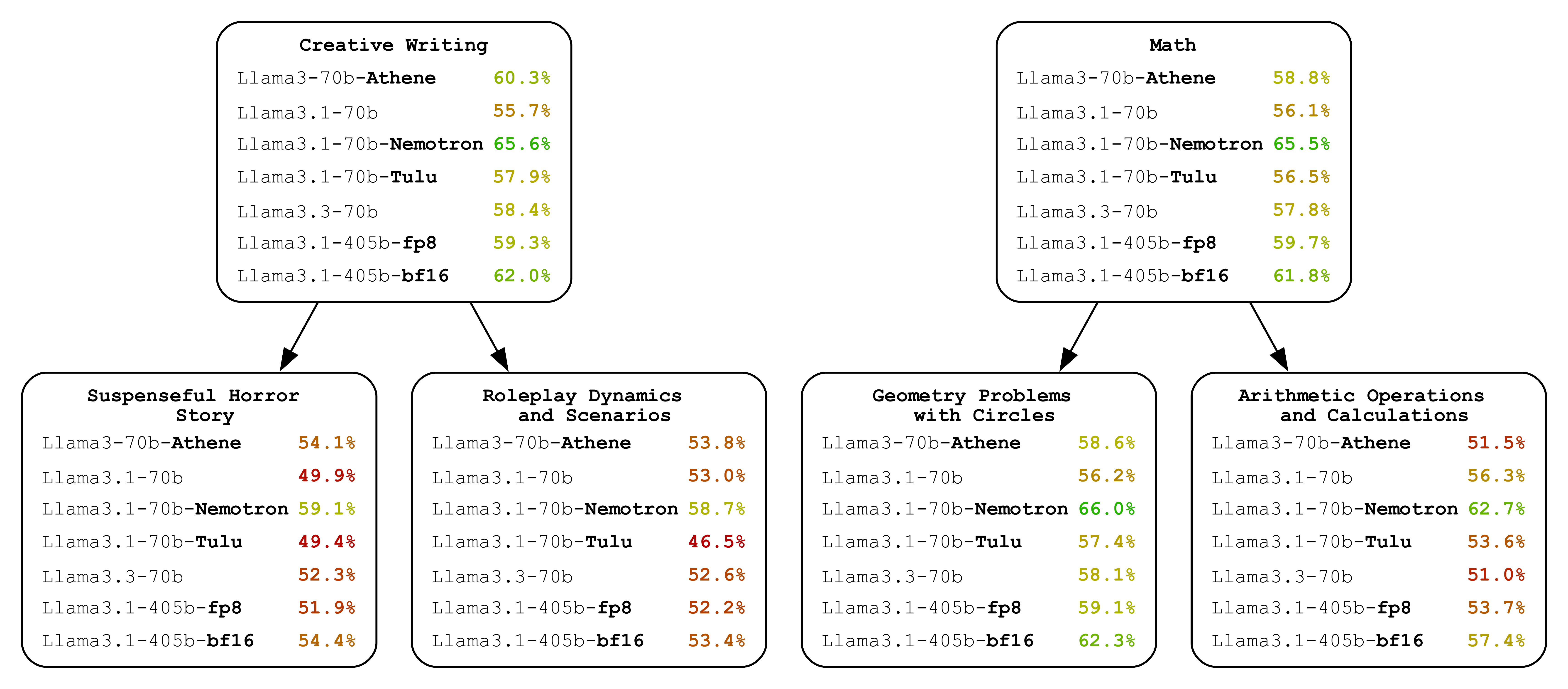}
    \caption{\textbf{Regression test} on Llama models with creative writing and math prompts.  The percentages shown signify win rates against \texttt{Llama-3-70B} under the BT coefficients predicted from \texttt{P2L-7B}.}
    \label{fig:llama-regression-1}
\end{figure}

\begin{figure}[H]
    \centering
    \includegraphics[width=1\linewidth]{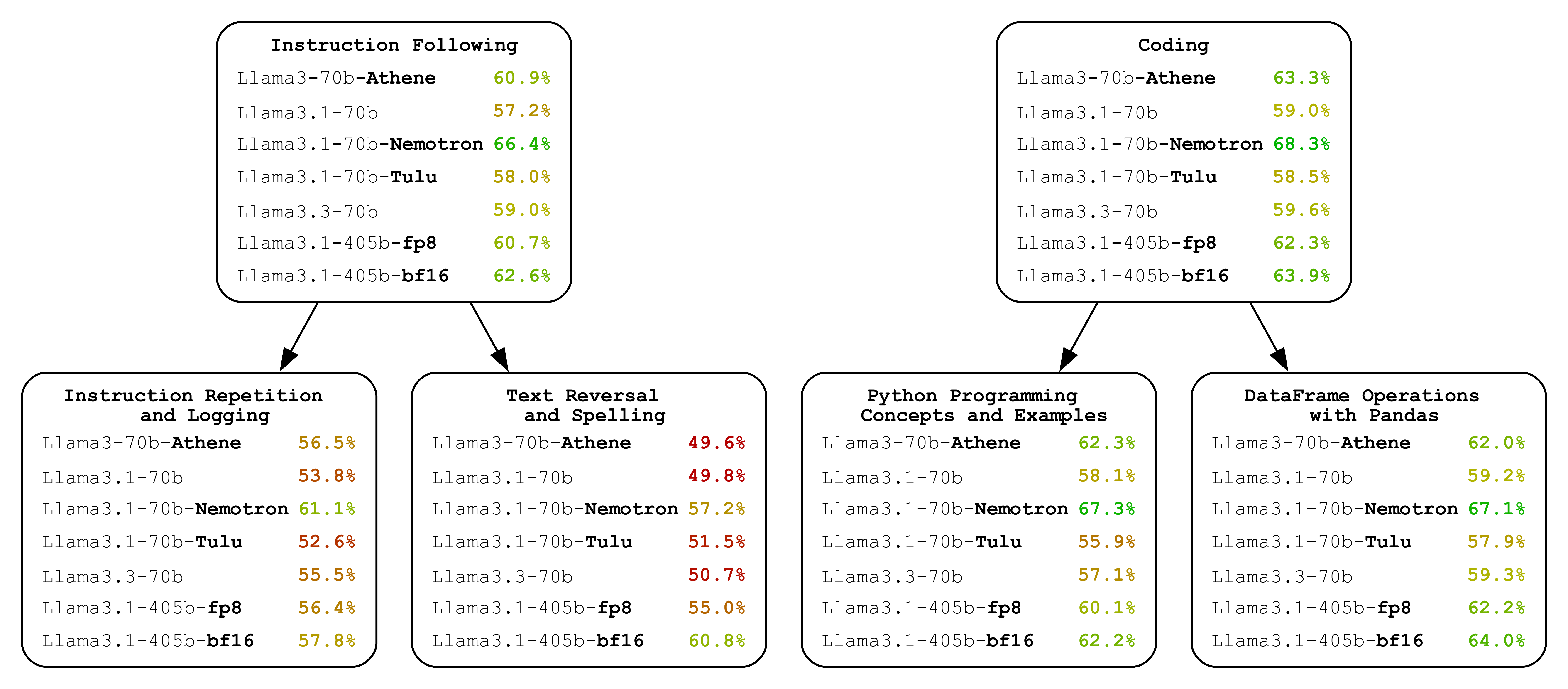}
    \caption{\textbf{Regression test} on Llama models with instruction following and coding prompts. The percentages shown signify win rates against \texttt{Llama-3-70B} under the BT coefficients predicted from \texttt{P2L-7B}.}
    \label{fig:llama-regression-2}
\end{figure}

\begin{figure}[H]
    \centering
    \includegraphics[width=\columnwidth]{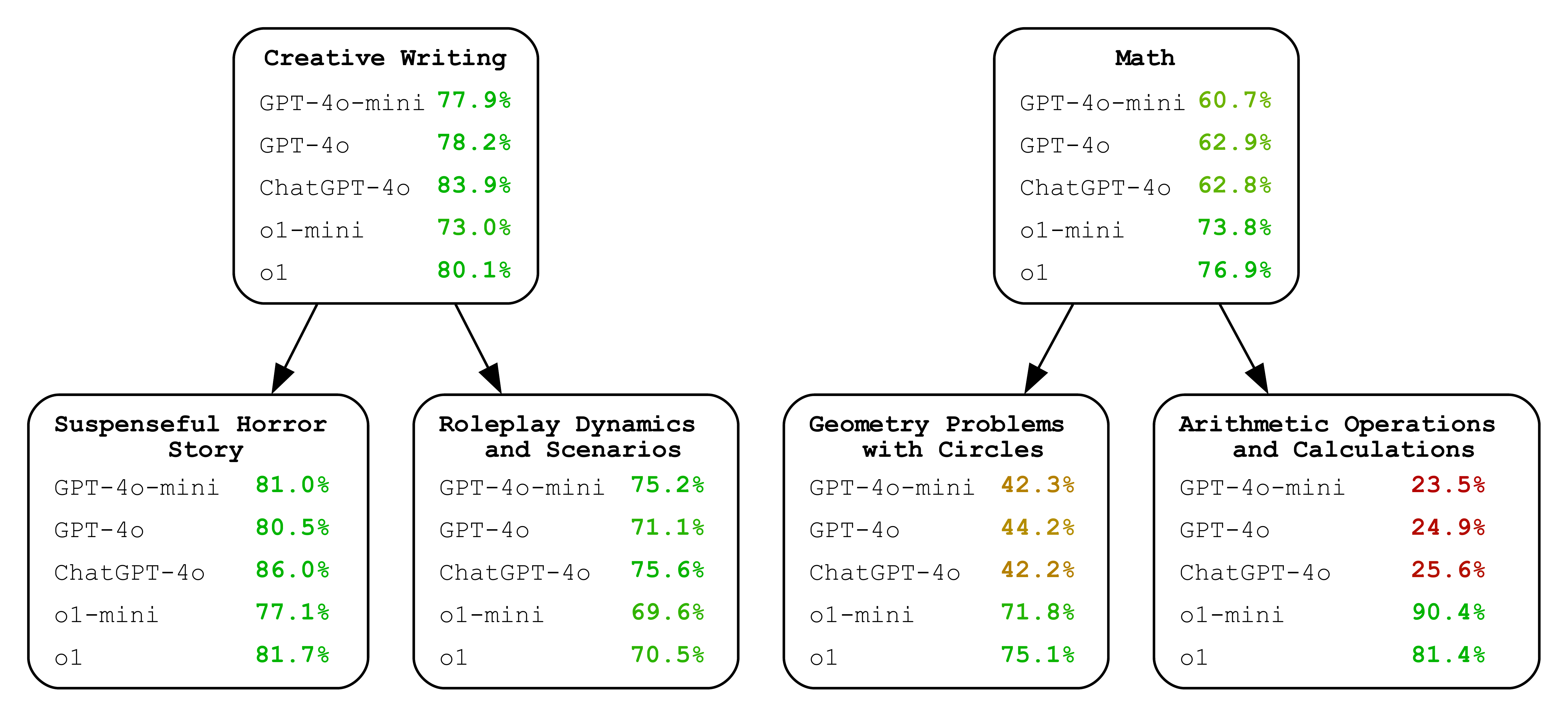}
    \caption{\textbf{Regression test using grounded Rao-Kupper.} We show the strengths of different OpenAI models on various topic clusters based on \texttt{P2L-7B} with a grounded RK regression head (see Section~\ref{sec:p2r}) and a dataset of unlabeled prompts. The percentage represents the sigmoid of the model coefficient.
    Because the RK model is grounded, this corresponds roughly to a signal of the model's reliability, i.e., its tendency to produce an answer that exceeds the voter's minimum bar of quality. The results show strong category-specific variability in performance; for example, \texttt{GPT-4o-mini} and \texttt{o1} have roughly the same reliability in the category ``Suspenseful Horror Story'', but not ``Arithmetic Operations and Calculations''. We can also see that some categories are more difficult in general for LLMs to answer reliably, and thus we see larger performance improvements from test-time compute models like \texttt{o1} and \texttt{o1-mini}.}
    \label{fig:analysis-grounded-rk}
\end{figure}

\begin{figure}[p]
    \vspace{-0.5cm}
    \centering
    \includegraphics[width=0.7\linewidth]{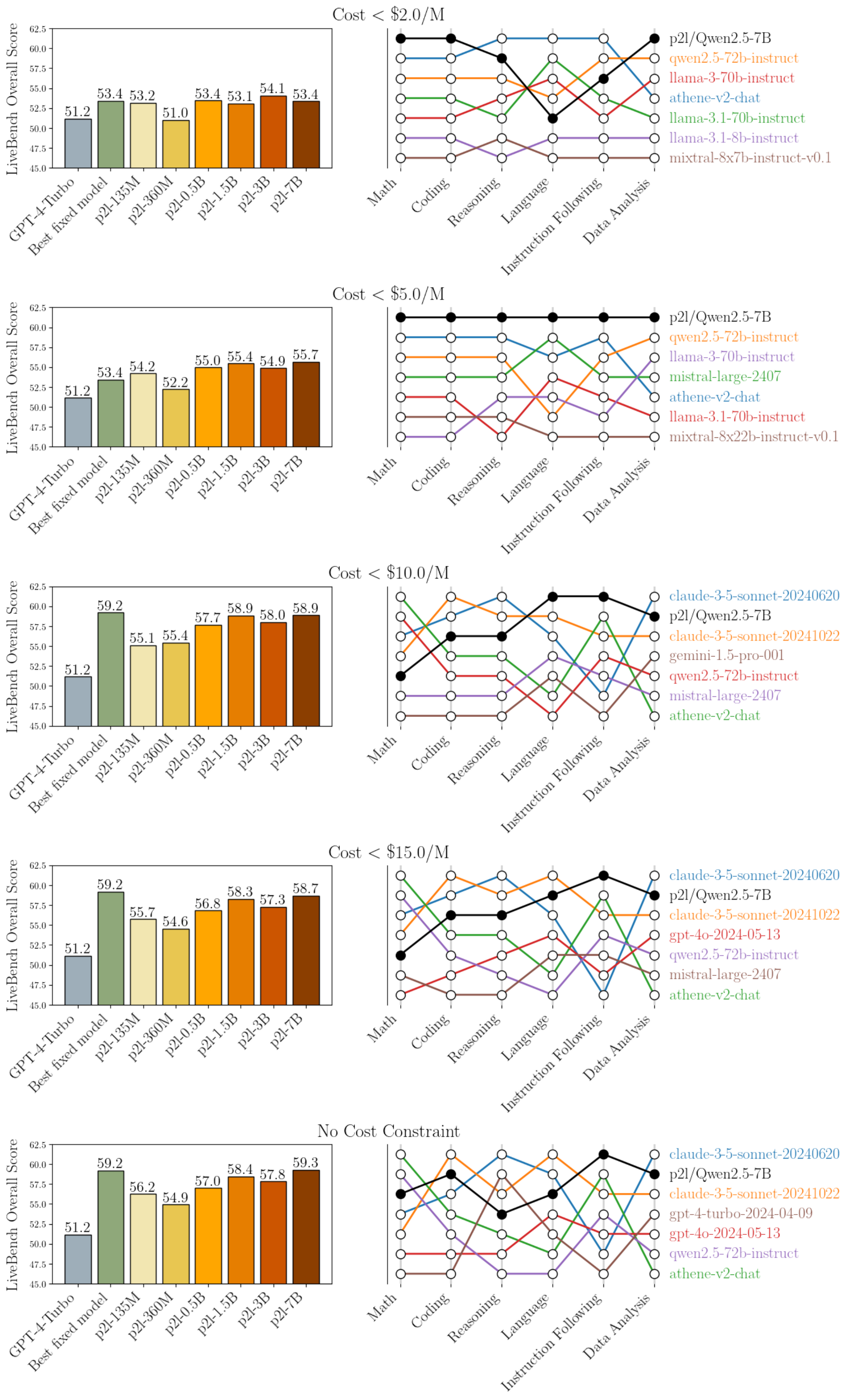}
    \caption{\textbf{LiveBench cost routing.} Comparison of the P2L cost-aware router and static models on LiveBench under various inference-cost constraints. The left plots show each model’s overall LiveBench performance at different maximum cost thresholds, while the right plots display models’ relative rankings across multiple categories at the specific cost limit. By adaptively allocating prompts to cheaper or more expensive models when advantageous, the P2L router consistently matches or surpasses the best single model within each budget.}
    \label{fig:livebench_cost_fig}
\end{figure}

\begin{table}[H]
\centering
\footnotesize
\begin{tabular}{lccccccc}
\toprule
Model & LiveBench & Math & Coding & Reasoning & Language & Instruction& Data \\
      & Score     &        &  &  &          &    Following  &  Analysis         \\

\midrule
\texttt{P2L-7B} & $\mathbf{59.3}$ & 51.9 & 65.2 & 50.0 & 56.5 & $\mathbf{75.8}$ & $\underline{56.3}$ \\
\texttt{claude-3-5-sonnet-20240620} & $\underline{59.2}$ & 51.3 & 63.5 & $\mathbf{54.7}$ & $\underline{56.8}$ & 72.3 & $\mathbf{56.7}$ \\
\texttt{claude-3-5-sonnet-20241022} & 59.0 & 51.3 & $\underline{66.8}$ & 50.0 & $\mathbf{57.0}$ & 74.1 & 54.9 \\
\texttt{P2L-1.5B} & 58.4 & $\mathbf{55.3}$ & $\mathbf{67.5}$ & 48.0 & 51.4 & 71.9 & $\mathbf{56.7}$ \\
\texttt{P2L-3B} & 57.8 & 49.6 & $\underline{66.8}$ & 50.7 & 53.3 & 70.4 & 56.2 \\
\texttt{P2L-0.5B} & 57.0 & 51.9 & 59.6 & 50.7 & 51.7 & 73.4 & 54.8 \\
\texttt{P2L-135M} & 56.2 & 48.9 & 63.5 & 50.0 & 47.1 & 74.1 & 54.0 \\
\texttt{P2L-360M} & 54.9 & 52.4 & 58.1 & 44.0 & 44.1 & 74.4 & $\mathbf{56.7}$ \\
\texttt{athene-v2-chat} & 53.4 & $\underline{53.4}$ & 56.9 & 48.0 & 37.5 & $\underline{74.6}$ & 50.2 \\
\texttt{gpt-4o-2024-05-13} & 52.8 & 42.7 & 50.4 & 47.3 & 49.3 & 72.4 & 54.4 \\
\texttt{qwen2.5-72b-instruct} & 52.6 & 52.3 & 55.6 & 47.3 & 36.0 & 73.3 & 51.1 \\
\texttt{gpt-4-turbo-2024-04-09} & 51.2 & 40.3 & 45.8 & $\underline{52.7}$ & 45.3 & 68.4 & 54.5 \\
\texttt{mistral-large-2407} & 50.4 & 48.4 & 45.8 & 44.0 & 40.5 & 73.1 & 50.4 \\
\texttt{chatgpt-4o-latest-20241120} & 49.4 & 37.7 & 44.4 & 44.7 & 43.7 & 74.1 & 51.7 \\
\texttt{gemini-1.5-pro-001} & 44.2 & 36.2 & 33.7 & 34.0 & 37.6 & 68.9 & 54.8 \\
\texttt{llama-3.1-70b-instruct} & 42.4 & 34.4 & 32.9 & 34.7 & 36.4 & 68.9 & 47.3 \\
\texttt{llama-3-70b-instruct} & 41.7 & 26.3 & 28.7 & 40.0 & 36.3 & 68.5 & 50.7 \\
\texttt{mixtral-8x22b-instruct-v0.1} & 37.5 & 28.0 & 32.3 & 36.0 & 27.9 & 65.5 & 35.5 \\
\texttt{llama-3.1-8b-instruct} & 26.3 & 19.5 & 14.5 & 18.7 & 17.8 & 53.9 & 33.3 \\
\texttt{mixtral-8x7b-instruct-v0.1} & 22.1 & 12.4 & 10.6 & 23.3 & 12.8 & 46.1 & 27.4 \\
\bottomrule
\end{tabular}
\caption{
\footnotesize
\textbf{LiveBench performance comparison}. Comprehensive evaluation of language models across seven capability categories: overall LiveBench score, mathematics, coding, reasoning, language understanding, instruction following, and data analysis. Results show performance comparison between p2l models at different parameter scales (135M to 7B), Claude-3.5 Sonnet versions, and other leading language models including GPT-4, Gemini, and LLaMA variants. All models were evaluated using identical inference settings as those employed in Chatbot Arena to ensure fair comparison. Scores are presented as percentages, with the highest score in each category shown in \textbf{bold} and second-highest \underline{underlined}. 
\texttt{P2L-7B} achieves top performance in LiveBench Score (59.3) and Instruction Following (75.8), while maintaining competitive performance across other categories.}
\label{tab:livebench_table}
\end{table}

\section{Additional information}
\subsection{Model list}
\label{app:model-list}
The full list of models is: \texttt{athene-v2-chat} \citep{frickathene}, \texttt{chatgpt-4o-latest-20241120}, \texttt{claude-3-5-haiku-20241022}, \texttt{claude-3-5-sonnet-20240620}, \texttt{claude-3-5-sonnet-20241022} \citep{claude32024family}, \texttt{deepseek-v3} \citep{liu2024deepseek}, \texttt{gemini-1.5-flash-001}, \texttt{gemini-1.5-flash-002}, \texttt{gemini-1.5-pro-001}, \texttt{gemini-1.5-pro-002} \citep{team2024gemini}, \texttt{gemini-2.0-flash-exp}, \texttt{gemini-2.0-flash-thinking-exp-1219}, \texttt{gemini-exp-1206}, \texttt{gemma-2-27b-it}, \texttt{gemma-2-9b-it} \citep{team2024gemma}, \texttt{glm-4-plus}, \texttt{gpt-4-1106-preview}, \texttt{gpt-4-turbo-2024-04-09} \citep{openai2023gpt4turbo}, \texttt{gpt-4o-2024-05-13}, \texttt{gpt-4o-2024-08-06}, \texttt{gpt-4o-mini-2024-07-18} \citep{openai2024gpt4o}, \texttt{llama-3-70b-instruct}, \texttt{llama-3.1-405b-instruct-fp8}, \texttt{llama-3.1-70b-instruct}, \texttt{llama-3.1-8b-instruct}, \texttt{llama-3.3-70b-instruct} \citep{llama3modelcard}, \texttt{mistral-large-2407}, \texttt{mixtral-8x22b-instruct-v0.1}, \texttt{mixtral-8x7b-instruct-v0.1} \citep{jiang2024mixtral}, \texttt{o1-2024-12-17}, \texttt{o1-mini}, \texttt{o1-preview} \citep{jaech2024openai}, \texttt{qwen2.5-72b-instruct} \citep{qwen2.5}, and \texttt{yi-lightning} \citep{ai2024yi}.

\end{document}